\newtheorem{assumption}{Assumption}
\begin{document}

\title{FedWCM: Unleashing the Potential of Momentum-based Federated Learning in Long-Tailed Scenarios}

\author{Tianle Li}
\email{2200271015@email.szu.edu.cn}
\affiliation{%
  \institution{Shenzhen University}
  \city{Shenzhen}
  \country{China}
}
\authornote{These authors contributed equally to this work.}

\author{Yongzhi Huang}
\email{huangyongzhi@email.szu.edu.cn}
\affiliation{%
  \institution{The Hong Kong University of Science and Technology (Guangzhou)}
  \city{Guangzhou}
  \country{China}
}
\authornotemark[1]

\author{Linshan Jiang}
\email{linshan@nus.edu.sg}
\affiliation{%
  \institution{National University of Singapore}
  \city{Singapore}
  \country{Singapore}
}
\authornote{Co-corresponding author. For academic inquiries, please contact this author.}

\author{Qipeng Xie}
\email{qxieaf@connect.ust.hk}
\affiliation{%
  \institution{The Hong Kong University of Science and Technology (Guangzhou)}
  \city{Guangzhou}
  \country{China}
}

\author{Chang Liu}
\email{liuc0063@e.ntu.edu.sg}
\affiliation{%
  \institution{Nanyang Technological University}
  \city{Singapore}
  \country{Singapore}
}

\author{Wenfeng Du}
\email{duwf@szu.edu.cn}
\affiliation{%
  \institution{Shenzhen University}
  \city{Shenzhen}
  \country{China}
}

\author{Lu Wang}
\email{wanglu@szu.edu.cn}
\affiliation{%
  \institution{Shenzhen University}
  \city{Shenzhen}
  \country{China}
}
\authornote{Main corresponding author.}

\author{Kaishun Wu}
\email{wuks@hkust-gz.edu.cn}
\affiliation{%
  \institution{The Hong Kong University of Science and Technology (Guangzhou)}
  \city{Guangzhou}
  \country{China}
}

\begin{abstract}
  Federated Learning (FL) enables decentralized model training while preserving data privacy. Despite its benefits, FL faces challenges with non-identically distributed (non-IID) data, especially in long-tailed scenarios with imbalanced class samples. Momentum-based FL methods, often used to accelerate FL convergence, struggle with these distributions, resulting in biased models and making FL hard to converge. To understand this challenge, we conduct extensive investigations into this phenomenon, accompanied by a layer-wise analysis of neural network behavior. Based on these insights, we propose FedWCM, a method that dynamically adjusts momentum using global and per-round data to correct directional biases introduced by long-tailed distributions. Extensive experiments show that FedWCM resolves non-convergence issues and outperforms existing methods, enhancing FL's efficiency and effectiveness in handling client heterogeneity and data imbalance.
\end{abstract}

\begin{CCSXML}
<ccs2012>
  <concept>
    <concept_id>10010147.10010257</concept_id>
    <concept_desc>Computing methodologies~Machine learning</concept_desc>
    <concept_significance>500</concept_significance>
  </concept>

  <concept>
    <concept_desc>Computing methodologies~Federated learning</concept_desc>
    <concept_significance>300</concept_significance>
  </concept>

  <concept>
    <concept_id>10010172</concept_id>
    <concept_desc>Computing methodologies~Distributed algorithms</concept_desc>
    <concept_significance>100</concept_significance>
  </concept>

  <concept>
    <concept_desc>Computing methodologies~Imbalanced learning</concept_desc>
    <concept_significance>100</concept_significance>
  </concept>
</ccs2012>
\end{CCSXML}

\ccsdesc[500]{Computing methodologies~Machine learning}
\ccsdesc[300]{Computing methodologies~Federated learning}
\ccsdesc[100]{Computing methodologies~Distributed algorithms}
\ccsdesc[100]{Computing methodologies~Imbalanced learning}

\maketitle
\section{Introduction}

Federated Learning (FL) \cite{mcmahan2017communication} enables collaborative model training across multiple parties without centralizing data, thus ensuring privacy by sharing only model updates with a central server, which aggregates and redistributes a global model. A challenge in these environments is the non-independently and identically distributed (non-IID) data across parties \cite{zhu2021federated}, which can slow convergence and reduce performance \cite{karimireddy2020scaffold}. Solutions include data augmentation \cite{jeong2018communication,goetz2020federated}, personalized federated learning \cite{fallah2020personalized}, and clustering \cite{sattler2020clustered}. Momentum-based methods—applied at the server \cite{reddi2020adaptive,sun2024role}, client \cite{karimireddy2020mime,xu2021fedcm}, or both \cite{wang2023accelerating}—help mitigate non-IID issues by speeding convergence through historical gradient accumulation, offering a simple and efficient solution without additional computation.

However, in real-world situations, lots of non-IID data may meet a long-tailed distribution~\cite{shuai2022balancefl}, where the global class distribution is imbalanced: head classes contain abundant samples, whereas tail classes have relatively few, leading to a bias in client models towards the head classes~\cite{shang2022}. For example, in some IoT applications, such as smart homes and healthcare monitoring systems, common activities such as sitting and walking dominate the data. At the same time, critical events like falls or specific medical conditions are rare. As the complexity of addressing long-tailed non-IID data will be significantly increased, it remains an open problem that is far from being resolved~\cite{chen2022towards}.

So far, there are very limited studies focusing on non-IID data collocated with long-tailed data distribution to tackle the dual challenges~\cite{zhang2023survey}. BalanceFL~\cite{shuai2022balancefl} corrects local training through a local update scheme, forcing the local model to behave as if it were trained on a uniform distribution. The Fed-GraB method~\cite{xiao2024fed} addresses global long-tailed distribution by employing a self-adjusting gradient balancer and a prior analyzer. CReFF~\cite{shang2022federated} alleviates the bias by retraining classifiers on federated features. CLIP2FL~\cite{shi2023clip} leverages the strengths of the CLIP model to enhance feature representation. However, on the one hand, these methods primarily mitigate issues arising from long-tail distributions without any special design for the convergence speed; on the other hand, these approaches often require disruptive modifications to the methods themselves, making it challenging to integrate with other potential techniques to accelerate the convergence process.

To tackle these challenges, we aim to unleash the potential of the momentum-based approach to improve the learning performance on long-tailed non-IID data distribution.  We attempt to adopt a naive momentum-based approach~\cite{xu2021fedcm,cheng2023momentum} as a start point, and we find that they perform terribly in various settings, where the convergence details are shown in the motivation. It brings us a question: \textit{can we design novel plug-in modules that can fit momentum-based FL algorithms to deal with the long-tailed non-IID data challenges while keeping their strength?}

We delve into this question by first analyzing the convergence difficulties caused by long-tailed distributions, identifying the core issue as momentum-induced global direction distortion. To tackle this issue, we propose an improved momentum-based federated learning approach called FedWCM that introduces a novel momentum adjustment mechanism. This mechanism consists of two key adaptive strategies: first, we adjust the aggregation method of momentum by utilizing global insights to refine how momentum is collected and integrated across clients; second, we modify the degree to which momentum is applied, dynamically tailoring it based on comprehensive global and local data assessments. By implementing these strategies, we ensure that the momentum mechanism effectively mitigates the negative impacts of long-tailed data distributions. This dual approach preserves momentum’s acceleration benefits while addressing the inherent challenges.

Our main contributions are summarized as follows:
\begin{itemize}
\item To the best of our knowledge, we are the first to identify the convergence challenges faced by momentum-based federated learning under long-tailed non-IID data distributions. Through intuitive analysis, we have demonstrated that this issue arises since the momentum induced causes the global aggregation direction to be skewed by the bias introduced by long-tailed data, thereby hindering model convergence.
\item Inspired by the above insights, we propose FedWCM, in which global information is incorporated to adaptively adjust the previously fixed momentum value and momentum aggregation weights on a per-round basis, resulting in a balanced global momentum and enabling it to exert an appropriate influence in the next round based on its situation. With this novel design, FedWCM can leverage the advantages of momentum while avoiding the non-convergence issues that arise in long-tailed scenarios.

\item We perform the convergence analysis and conduct extensive experiments to demonstrate the effectiveness of FedWCM. Our theoretical proof indicates that FedWCM has the same convergence rate as FedCM~\cite{xu2021fedcm}. Furthermore, our experiments result shows that FedWCM exceeds the performance of state-of-the-art algorithms on various datasets and outperforms some potential long-tail enhancement methods when integrated with FedCM.
\end{itemize}
\section{Related work}
\subsection{Momentum-based Federated Learning}

To address client drift in heterogeneous federated learning, researchers have proposed various momentum-based methods. SCAFFOLD ~\cite{karimireddy2020scaffold} introduces control variates to correct client update biases, Mime~\cite{karimireddy2020mime} combines momentum SGD with variance reduction techniques, FedDyn~\cite{acar2021federated} balances global and local objectives through dynamic regularization, and SlowMo~\cite{wang2019slowmo} incorporates momentum updates on the server side. These methods mitigate client drift and improve model convergence through different momentum strategies. 

\subsection{Federated Long-tailed Learning}
Federated long-tailed learning~\cite{li2024federated} aims to address global class imbalance issues in federated environments. Some studies explore federated long-tailed learning solutions from a meta-learning perspective~\cite{qian2023long}, aiming to improve model adaptability to new tasks. Research has also focused on client selection and aggregation strategy improvements~ \cite{zhang2021dubhe,geng2022bearing} to better handle long-tailed data. Additionally, some researchers~\cite{xiao2024fedloge} have attempted to address the challenge of model personalization, aiming to improve the accuracy of local models in long-tailed environments. While these methods have made progress in mitigating class imbalance issues, they primarily focus on addressing problems arising from long-tailed data, with less consideration given to client heterogeneity.

There are limited studies working on non-IID data collocated with long-tailed data distribution. Model decoupling methods like FedGraB~\cite{xiao2024fed} characterize global long-tailed distributions under privacy constraints and adjust local learning strategies. Feature enhancement methods have been explored, with CReFF~ \cite{shang2022federated} retraining classifiers on federated features, and CLIP2FL~\cite{shi2023clip} enhancing client feature representation through knowledge distillation and prototype contrastive learning. Some studies explore federated long-tailed learning solutions from a meta-learning perspective~\cite{qian2023long}, aiming to improve model adaptability to new tasks. While these methods have made progress in addressing this problem, some practical data partitioning scenarios are not considered, and there is no specific design to improve algorithm efficiency.

Several approaches that addresses the local long-tailed distribution in centralized  machine learning may have the potential to be integrated with momentum-based FL approaches for long-tailed non-IID data distribution. Researchers have proposed various methods to tackle local long-tailed challenge. Re-balancing strategies such as Focal loss ~\cite{lin2017focal}, PriorCELoss~\cite{hong2021disentangling} and LDAM loss~\cite{cao2019learning} adjust prediction probabilities or introduce label-distribution-aware margin losses to improve tail class performance. However,  experiments will show that the naive integration does not have sufficient effect to address this issue.

\section{Preliminaries}

\subsection{Federated Learning with Momentum}

Federated Learning (FL) allows multiple clients to train a global model collaboratively without sharing their local data. Mathematically, FL optimizes:

\begin{equation}
    \min_{w} \left[ F(w) = \sum_{k=1}^{K} \frac{n_k}{n} F_k(w) \right]
\end{equation}

where \( F_k(w) \) is the local loss function of client \(k\), \( n_k \) is the number of data samples at client \(k\), and \( w \) represents the global model parameters.

FL faces challenges due to data heterogeneity, leading to client drift, where inconsistent model updates slow or prevent convergence. To address this, momentum-based approaches~\cite{xu2021fedcm}~\cite{cheng2023momentum} has been proposed during local training:
\begin{equation}
    \mathbf{v}_{k}^{r} = \alpha \mathbf{g}_{k}^{r} + (1 - \alpha) \Delta_r
\end{equation}
where \(\mathbf{v}_{k}^{r}\) is the momentum of client \(k\) at round \(r\), \(\alpha\) is the momentum value, and \(\Delta_{r}\) is the global momentum obtained from the average of client gradients in the previous round.


\begin{figure}[!t]
	\centering
	\includegraphics[width=1\linewidth]{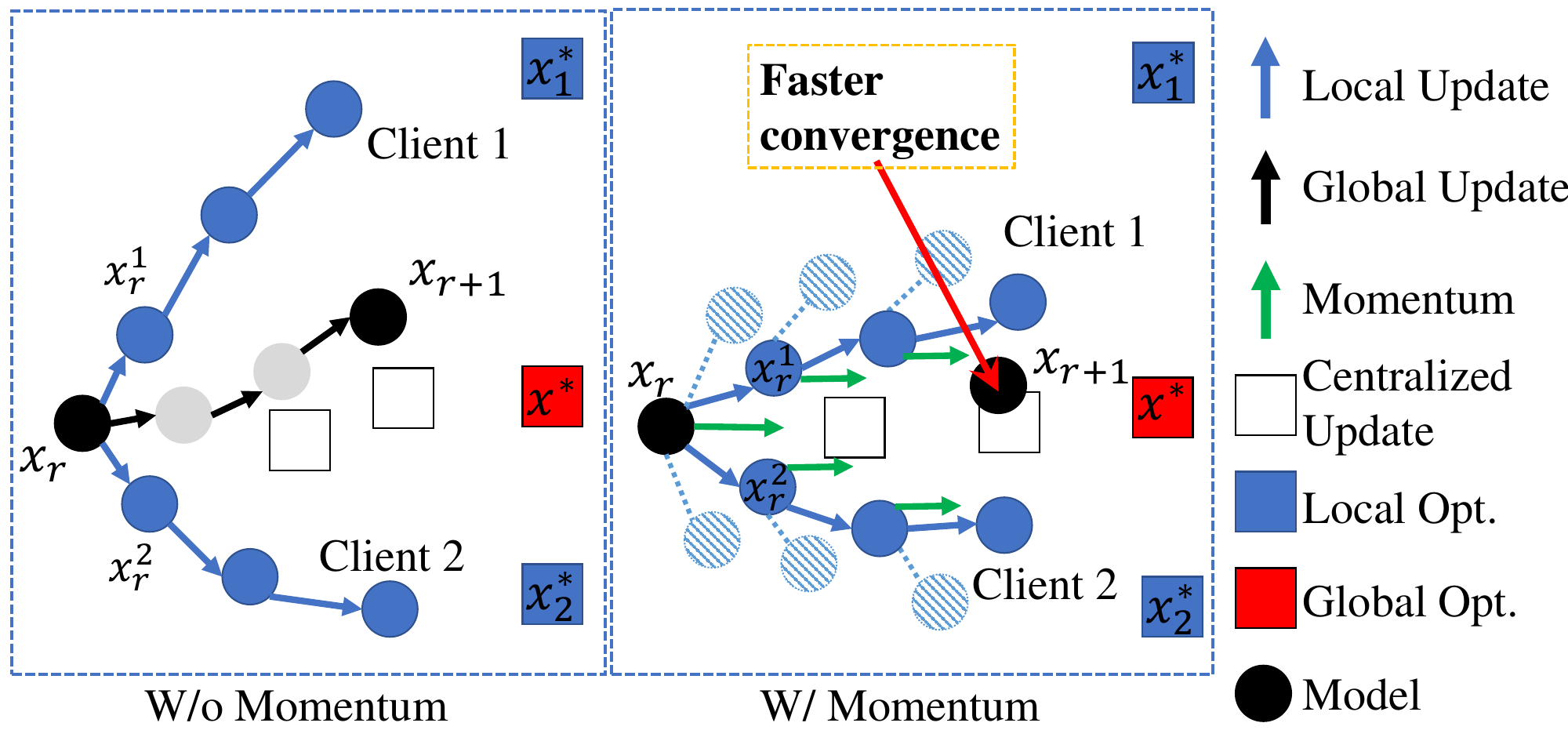}
	\caption{Client drift due to heterogeneity without momentum and alleviation with momentum.}
	\label{fig:client_drift_momentum}
\end{figure}

Figure~1 illustrates how momentum mitigates client drift and accelerates convergence. In detail, the momentum leverages the aggregated gradients from the previous round to obtain global directional information, aligning each client's update direction during local training. This alignment of client gradient updates effectively mitigates client drift.

\subsection{Long-Tailed and non-IID Data Distribution}

 Long-tailed data distributions are practical in real-world scenarios and characterized by a significant imbalance between the most and least frequent classes. We define the imbalance factor IF ~\cite{cao2019learning,shang2022} as:  $IF = \frac{n_1}{n_C}$
where \( n_1 \) and \( n_C \) are the global sample counts of the most and least frequent classes, respectively. When $IF$ is smaller, the tales of the global data distributions is longer.

To introduce non-IID data distribution on the clients, we use Dirichlet allocation~\cite{mcmahan2017communication}: $p_{k,c} \sim \text{Dir}(\beta) $  
where \(p_{k,c}\) is the proportion of samples for class \(c\) allocated to client \(k\), and \(\beta\) controls the degree of heterogeneity. Note that the smaller \(\beta\) denotes higher skew.

\begin{figure}[ht]
	\centering
	\includegraphics[width=1\linewidth]{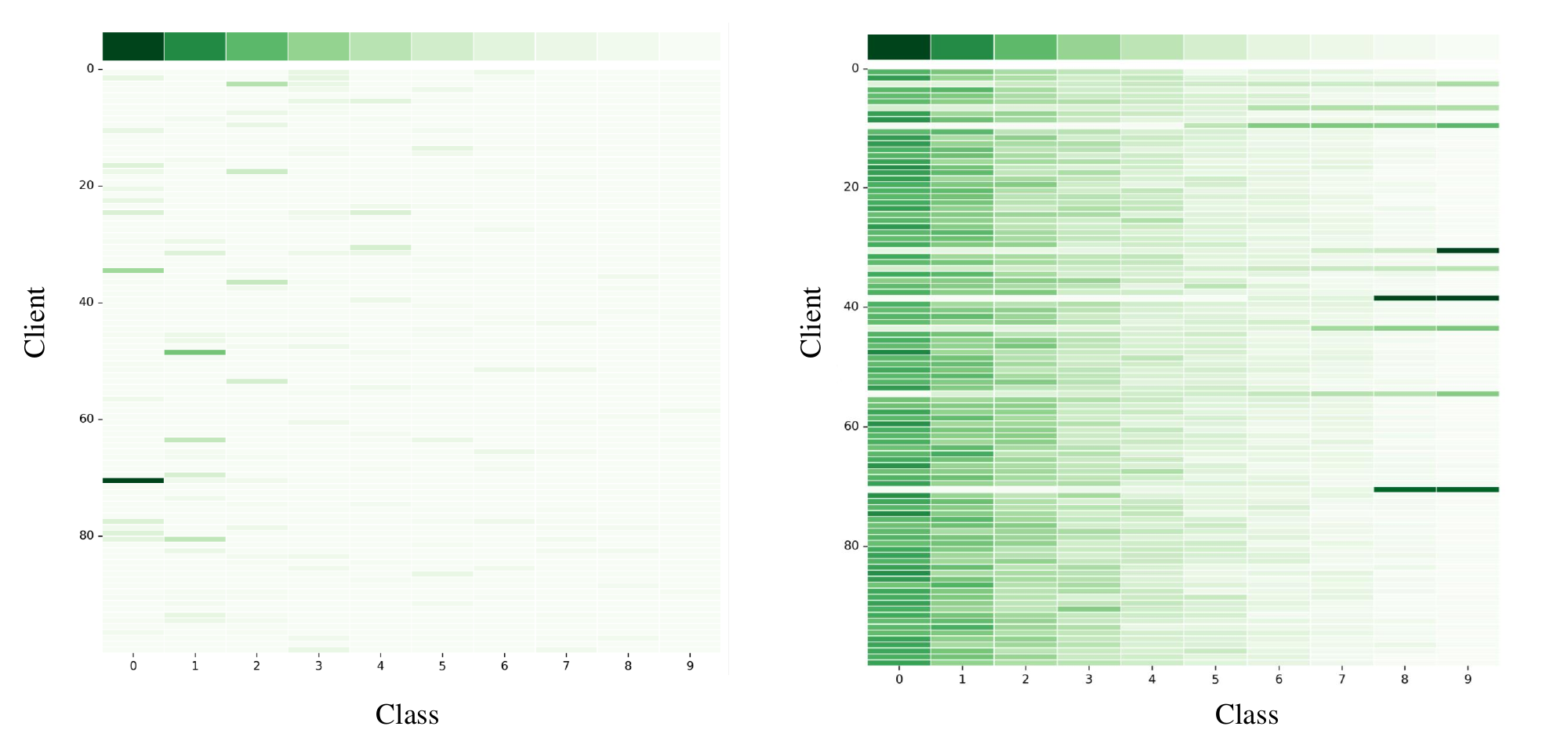}
\caption{ Client data partition on CIFAR-10: FedGrab v.s. ours, when $\beta=0.1$ and $IF=0.1 $.}
	\label{fig:data_distribution_comparison.png}
\end{figure}

While existing works in  LT-problem~\cite{chen2022towards} and FedGrab~\cite{xiao2024fed} leverage a similar long-tailed distributions, they often result in inconsistent client data quantities. That is to say, their data partition naturally incur high skew on data quantities. However, in some practical IoT scenarios~\cite{shuai2022balancefl}, the number of the data samples are similar among different clients. Thus, in this paper, we follow the partition strategy ~\cite{shuai2022balancefl} with $\beta$ and $IF$ to designate the training data to clients, as shown in the right part in Figure~\ref{fig:data_distribution_comparison.png}. The discussion of the data partition shown in the left figure is in Appendix A~\cite{anonymous2025appendix}\footnote{Appendix and full technical proofs are available at \url{https://li-tian-le.github.io/FedWCM-Supplement/}}.

\section{Motivation}

\begin{figure}[!tbp]
    \centering
    \includegraphics[width=1\linewidth]{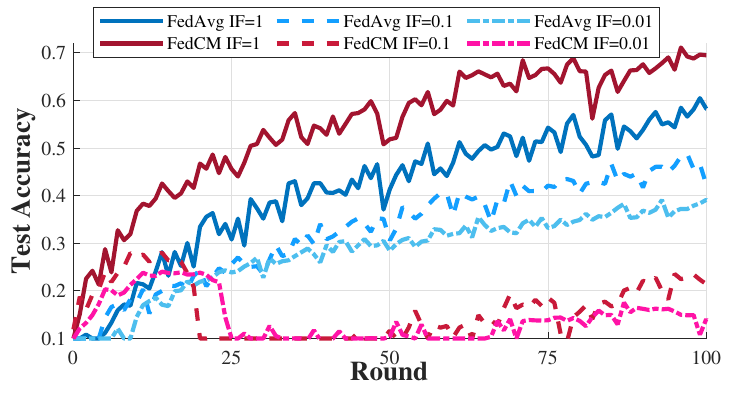}
    \caption{Test accuracy over communication rounds on CIFAR-10 with $\beta=0.1$ and various settings of $IF$.}
    \label{fig:fed_comparison}
\end{figure}

\begin{figure}[ht]
    \centering
    \setlength{\abovecaptionskip}{5pt} 
    \includegraphics[width=0.9\columnwidth]{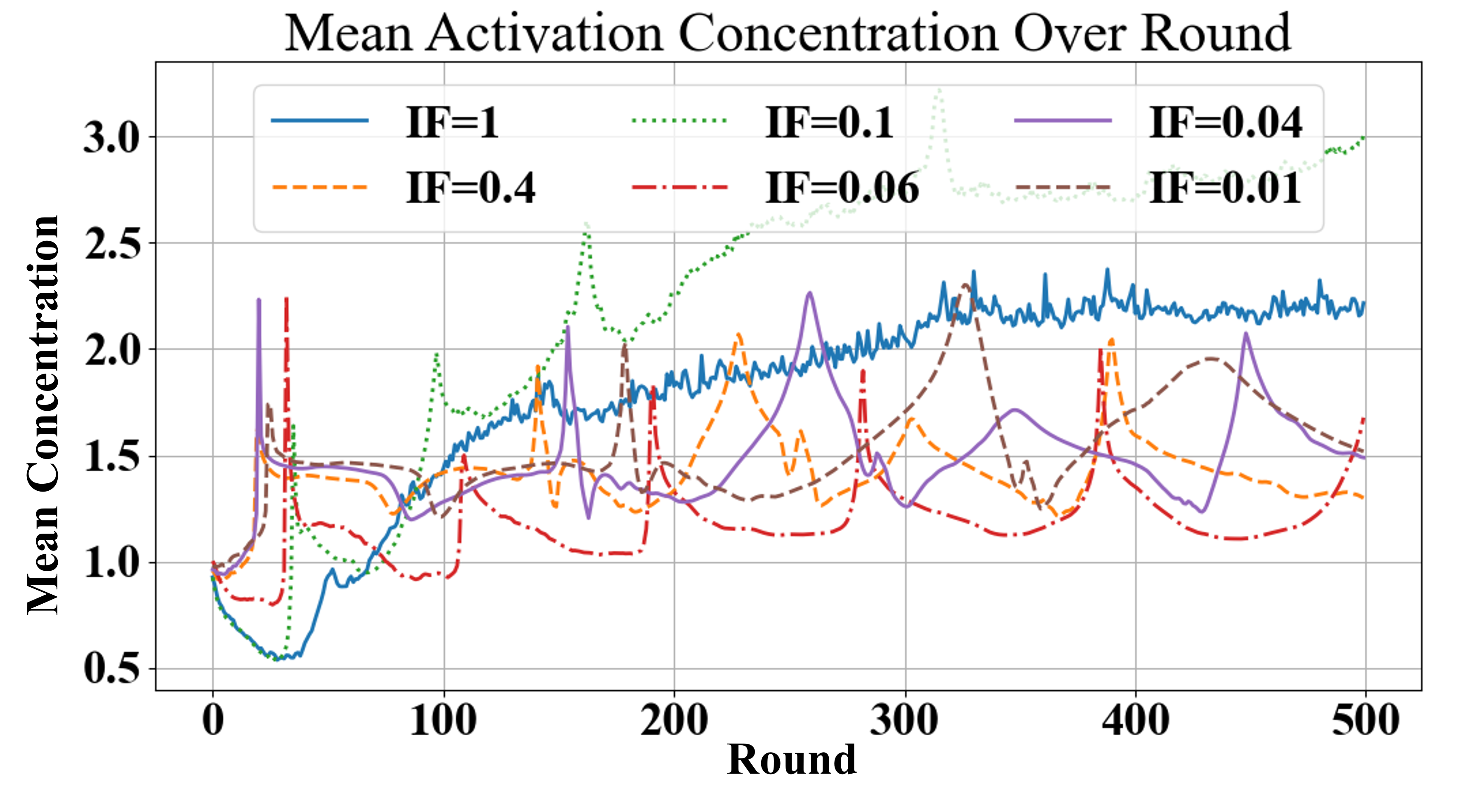} 
    \includegraphics[width=0.9\columnwidth]{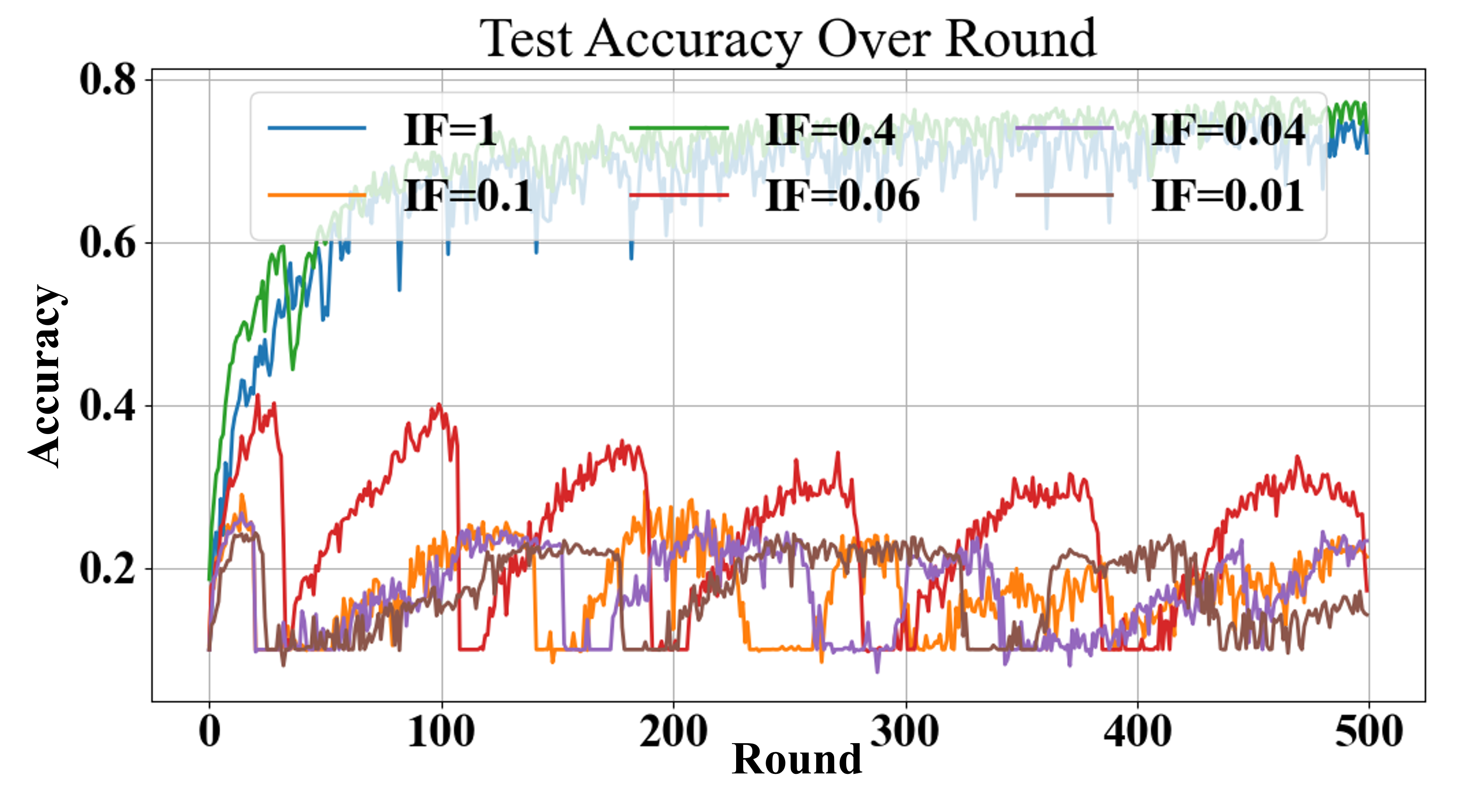} 
    \caption{Both figures show results across six different imbalance factor ($IF$) settings. Top: Average neuron concentration change in FedCM. Bottom: Test accuracy across the same six $IF$ settings.}
    \label{fig:fedcm_critical_analysis}
\end{figure}

Momentum-based methods like FedCM~\cite{xu2021fedcm} introduce global momentum to guide local training and reduce client drift, showing promise for faster convergence and higher accuracy compared to FedAvg under certain conditions (e.g., Dirichlet distribution with $\beta=0.1$ and $IF=1$, as shown in Figure~\ref{fig:fed_comparison}). However, while momentum can align aggregated gradients with the global objective in class-balanced scenarios, it becomes a double-edged sword in long-tailed data distributions, where it amplifies majority-class gradients, exacerbates data imbalance, and risks non-convergence. This issue is evident in the red and pink lines of Figure~\ref{fig:fed_comparison}, which show FedCM failing to converge under $IF=0.1$ and $IF=0.01$.



As shown in Figure~\ref{fig:fedcm_critical_analysis}, the top plot illustrates the mean activation concentration under different imbalance factor (IF) settings. Under balanced conditions (e.g., $IF=1$), the neuron concentration increases steadily, aligning with the theory of neural collapse~\cite{fang2021exploring, kothapalli2022neural, yang2022inducing}. However, under long-tailed distributions, we observe abrupt spikes in neuron concentration at certain critical points. Notably, as the imbalance increases (i.e., as IF decreases), these spikes occur more frequently and violently.

Through detailed analysis in Appendix B~\cite{anonymous2025appendix}, we infer that these sudden increases are caused by the momentum mechanism~\cite{tang2020long}, which leads to dominant neurons corresponding to majority classes occupying the representational space of others. These spikes are not random fluctuations but structured transitions in the optimization dynamics, where momentum amplifies class-specific gradients in a biased way. This results in a phenomenon known as \textbf{Minority Collapse}~\cite{fang2021exploring}, where the network overfits to majority classes, causing the representation of minority classes to degrade sharply.

In this situation, the model's capacity to accurately classify minority classes diminishes significantly, leading to a steep drop in test accuracy. This effect is illustrated in the bottom plot of Figure~\ref{fig:fedcm_critical_analysis}, showing a synchronous decline in accuracy as the minority representations collapse. These insights underline the importance of mitigating momentum-induced instability, which motivates the design of our proposed method.

\section{Methodology}
We propose FedWCM to tackle non-convergence in momentum-based federated learning with long-tailed distributions. FedWCM dynamically adjusts client weights and momentum, reducing the dominance of majority-class clients and enhancing the impact of minority-class clients while modulating momentum to maintain stable convergence.


\subsection{Global Information Gathering}
\begin{figure}[ht]
    \centering
    \includegraphics[width=1\linewidth]{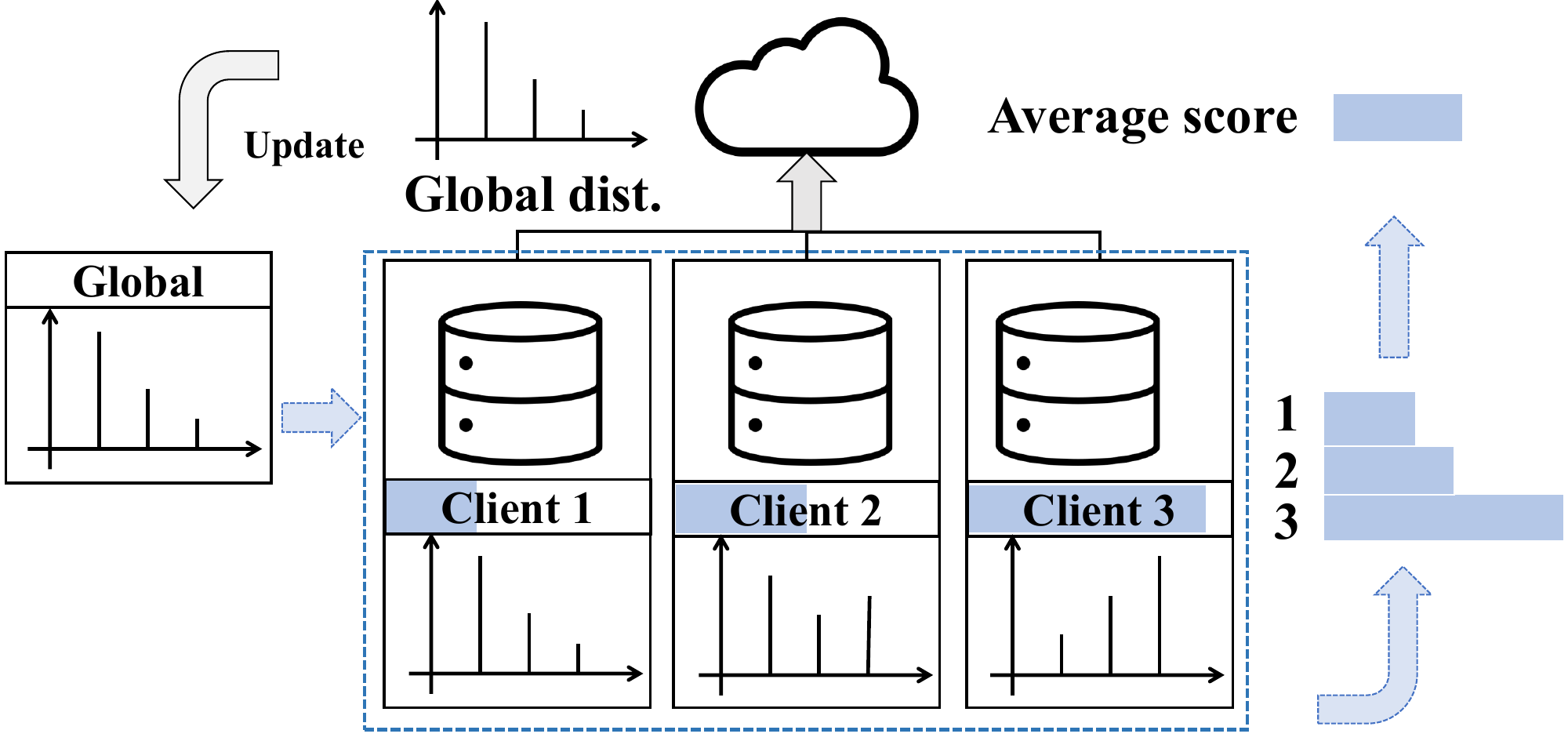}
    \caption{Illustration of global information gathering.}
    \label{fig:global_info_gathering}
\end{figure}
Clients need global knowledge of the data distribution to adjust their local updates. As illustrated in Figure~\ref{fig:global_info_gathering}, we compute and distribute the global data distribution $D_g$ to all clients. Each client computes a score based on the deviation of its local distribution from the global target distribution. The score for client $k$ is:

\begin{equation}
s_k = \frac{\sum_{c=1}^{C} |\hat{p}_c - p_c| \cdot n_{k,c}}{\sum_{c=1}^{C} n_{k,c}}
\end{equation}

where $\hat{p}_c$ is the proportion of class $c$ in the global target distribution, $p_c$ is the proportion of class $c$ in the global distribution, and $n_{k,c}$ is the number of samples of class $c$ for client $k$. By default, the global target distribution is assumed to be uniform, but users can adjust it based on the prior distribution relevant to their specific application scenarios.

A higher score indicates that the client has more globally scarce data. This scoring helps clients adjust their updates with a global perspective, mitigating the impact of data heterogeneity.

\subsection{Parameter Computation}
Next, client weights and adaptive momentum values are computed. As illustrated in Figure~\ref{fig:parameter_computation}, client weights are computed using a modified Softmax function:

\begin{equation}
w_k^r = \frac{\exp(s_k^r / T)}{\sum_{j \in \mathcal{P}_r} \exp(s_j^r / T)}
\end{equation}

where $T$ is a temperature parameter that works inversely with the imbalance: as the data imbalance increases, the temperature decreases, which increases the differences between client weights. This allows clients with more representative data to have more influence during the training process. Conversely, when the global data imbalance is low, the temperature increases, resulting in more uniform client weights. In practice, \(T\) is computed based on the discrepancy between the target (usually uniform) distribution and the actual global data distribution, scaled appropriately by the number of classes to maintain consistent sensitivity across different datasets. This mechanism helps balance the influence of each client on the model during training.

The adaptive momentum value $\alpha_{r+1}$ is calculated as:

\begin{equation}
\alpha_{r+1} = 0.1 + 0.9 \cdot (1 - e^{-\|\mathbf{T/K}\|_1}) \cdot q_r
\end{equation}

where \(q_r\) is the ratio between the average client score sampled in the current round \(\hat{s}^r\) and the overall average score \(\hat{s}\). The base momentum \(\alpha_0\) is set to 0.1. The ratio \(q_r\) reflects the degree to which minority classes are represented in the current round's sampled clients relative to the entire client population. When the current round's average client score is relatively high, indicating better minority class representation, the momentum is increased to leverage more informative gradient updates. This ensures that momentum is dynamically adjusted based on the data imbalance across rounds.

\begin{figure}[ht]
    \centering
    \includegraphics[width=1\linewidth]{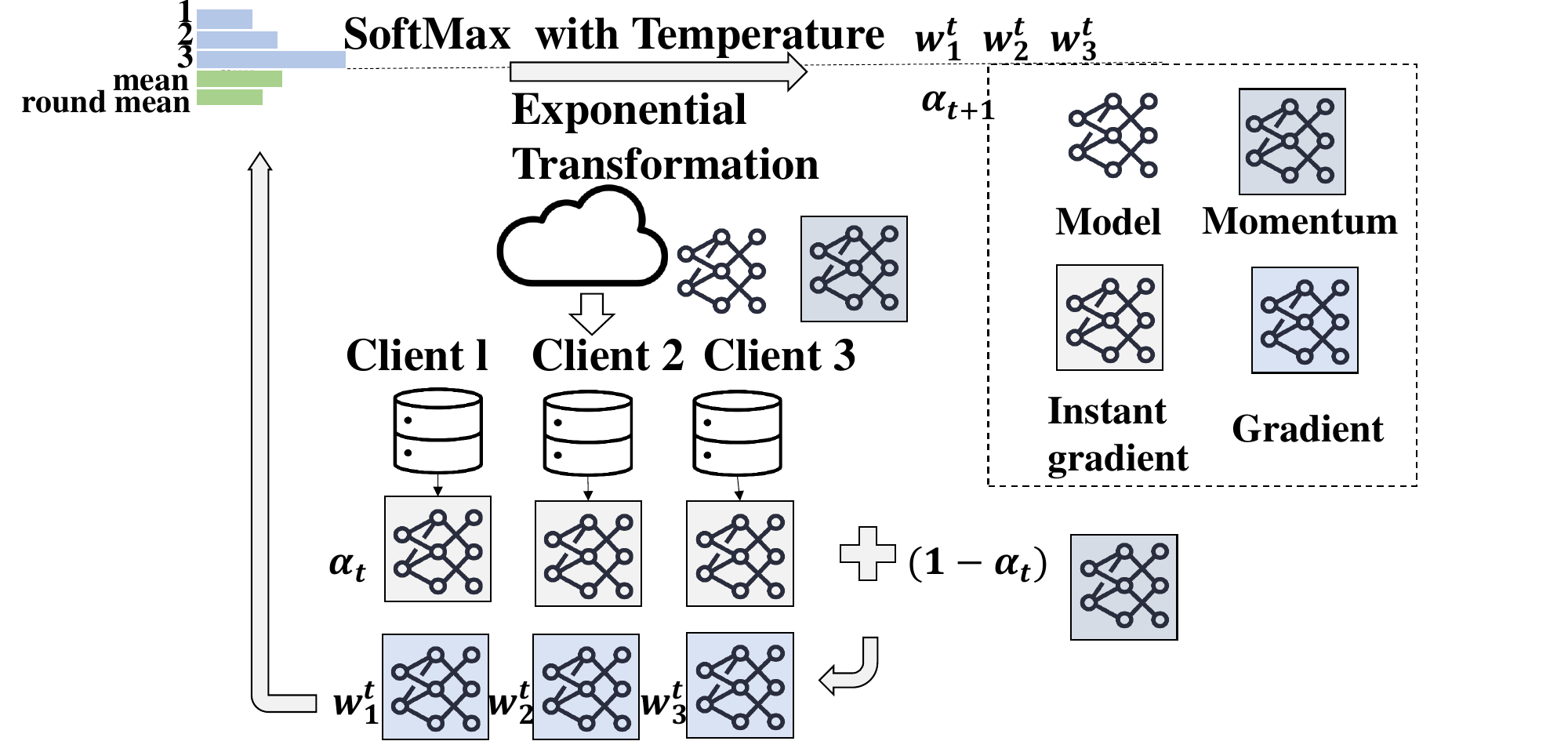}
    \caption{Parameter computation and application.}
    \label{fig:parameter_computation}
\end{figure}

\subsection{Parameter Application in Training}
The computed parameters are applied to local model updates during each round. Clients update their models using the formula:
\begin{equation}
x_{b+1,k}^{r} = x_{b,k}^r - \eta_l (\alpha_r \nabla F(x_{b,k}^r; D_{b,k}) + (1 - \alpha_r)\Delta_r)
\end{equation}

where $\alpha_r$ controls the impact of the current round's gradient. This dynamic adjustment helps balance the effects of majority-class and minority-class clients, improving model robustness across rounds.

\subsection{Overall FedWCM Algorithm}

In summary, we obtain global data distribution to dynamically set parameters for each round's momentum aggregation and application process, enabling momentum to alleviate, rather than exacerbate, the negative impacts of data imbalance. Through integrated mechanisms, our method effectively addresses the challenges posed by long-tail data distributions, ensuring improved federated learning outcomes. The pseudo algorithm is shown in Algorithm~\ref{alg:main_alg}.

\begin{algorithm}[ht]
\caption{FedWCM Algorithm}
\begin{algorithmic}
\State \textbf{Require:} initial model $x_0$, global momentum $\Delta_0$, $\alpha_0 = 0.1$, learning rates $\eta_l, \eta_g$, number of rounds $R$, local iterations $B$
\State Compute $\{s_k\}$ with $D_g$ using Equation (3)
\For{$r = 0$ to $R - 1$}
    \State Sample subset $\mathcal{P}_r$ of clients
    \For{Each client $k \in \mathcal{P}_r$}
        \State $x_{0,k}^r = x_r$
        \For{$b = 0$ to $B - 1$}
            \State Compute $g_{b,k}^r = \nabla f_k(x_{b,k}^r,D_{b,k})$
            \State $v_{b,k}^r = \alpha_r g_{b,k}^r + (1 - \alpha_r)\Delta_r$
            \State $x_{b+1,k}^r = x_{b,k}^r - \eta_l v_{b,k}^r$
        \EndFor
        \State $\Delta_k^r = x_{B,k}^r - x_r$
    \EndFor
    \State Compute $w_k^r$ using Equation (4)
    \State Compute $\alpha_{r+1}$ using Equation (5)
    \State $\Delta_{r+1} = \frac{1}{\eta_l B} \sum_{k \in \mathcal{P}_r} w_k^r\Delta_k^r$
    \State $x_{r+1} = x_r - \eta_g \Delta_{r+1}$
\EndFor
\end{algorithmic}
\label{alg:main_alg}
\end{algorithm}

\subsection{Privacy Discussion}

Note that our approach requires access to the global data distribution. While some works~\cite{zhang2021dubhe,fu2021cic,duan2023federated,liu2023fedlpa} emphasize protecting local data distributions, others~\cite{duan2019astraea,duan2020self,mhaisen2021optimal} tolerate the potential leakage of local data distributions for algorithmic design purposes. Nevertheless, our method is compatible with existing privacy-preserving techniques, such as Homomorphic Encryption (HE), to safeguard local data distributions. Following the approach in BatchCrypt~\cite{254465}, the HE-based protocol proceeds as follows:

\begin{itemize}
    \item \textbf{Key Generation:} A subset of randomly selected clients generates public-private key pairs and distributes the public keys to the other clients.
    \item \textbf{Encryption and Upload:} Each client encrypts their local class distribution using the received public key and uploads the encrypted data to the server.
    \item \textbf{Aggregation:} The server aggregates the encrypted class distributions using homomorphic operations, without decrypting them.
    \item \textbf{Decryption and Reconstruction:} The aggregated ciphertext is sent to the corresponding key generator, who decrypts it to obtain the global class distribution and uploads it to the server.
\end{itemize}

This protocol assumes a semi-honest server (i.e., honest-but-curious) and does not require any trusted third-party entities. We further explore this issue in Appendix~C~\cite{anonymous2025appendix}.

\subsection{Method Generalization}

The original FedWCM algorithm assumes that clients possess roughly equal data volumes. However, in real-world scenarios, this assumption often fails due to data quantity heterogeneity. To address this, we extend FedWCM to better handle high quantity skew, resulting in a generalized version termed \textbf{FedWCM-X}, as shown in Algorithm~\ref{alg:main-X_alg}.

\begin{algorithm}[ht]
\caption{FedWCM-X Algorithm}
\begin{algorithmic}
\State \textbf{Require:} initial model $x_0$, global momentum $\Delta_0$, $\alpha_0 = 0.1$, learning rates $\eta_l, \eta_g$, number of rounds $R$, local iterations $B$, standard iterations $\hat{B}$
\State Compute $\{s_k\}$ with $D_g$ using Equation (3)
\For{$r = 0$ to $R - 1$}
    \State Sample subset $\mathcal{P}_r$ of clients
    \For{Each client $k \in \mathcal{P}_r$}
        \State $x_{0,k}^r = x_r$
        \State $ \eta_l' = \eta_l \cdot \frac{\hat{B}}{B_k} $
        \For{$b = 0$ to $B_k - 1$}
            \State Compute $g_{b,k}^r = \nabla f_k(x_{b,k}^r,D_{b,k})$
            \State $v_{b,k}^r = \alpha_r g_{b,k}^r + (1 - \alpha_r)\Delta_r$
            \State $x_{b+1,k}^r = x_{b,k}^r - \eta_l' v_{b,k}^r$
        \EndFor
        \State $\Delta_k^r = x_{B_k,k}^r - x_r$
    \EndFor
    \State Compute $w_k^r$ using Equation (4)
    \State $ w_k'^r = w_k^r \cdot \frac{n_k}{\sum_{j} n_j} $
    \State Compute $\alpha_{r+1}$ using Equation (5)
    \State $\Delta_{r+1} = \frac{1}{\eta_l \hat{B}} \sum_{k \in \mathcal{P}_r} w_k'^r\Delta_k^r$
    \State $x_{r+1} = x_r - \eta_g \Delta_{r+1}$
\EndFor
\end{algorithmic}
\label{alg:main-X_alg}
\end{algorithm}

FedWCM-X introduces two key modifications to enhance robustness under non-uniform data distribution.

First, we incorporate an additional weighting factor based on each client's data volume. Specifically, for a client \(k\), if its original update weight is \(w_k\), the adjusted weight is defined as:
\[
w_k' = w_k \cdot \frac{n_k}{\sum_j n_j}
\]
where \(n_k\) denotes the number of local data samples held by client \(k\). This adjustment ensures that clients with more data exert proportionally more influence during model aggregation.

Second, we normalize the local learning rate according to the number of mini-batches each client executes. Since clients with more data may perform more local updates, we rescale the learning rate \(\eta_l\) to prevent instability caused by excessively large gradient steps. The adjusted learning rate is given by:
\[
\eta_l' = \eta_l \cdot \frac{\hat{B}}{B_k}
\]
where \(B_k\) is the number of local batches for client \(k\), and \(\hat{B}\) is a reference batch count corresponding to a balanced data split across all clients.

Together, these modifications enable FedWCM-X to retain the benefits of momentum-based correction while adapting to imbalanced client participation, improving convergence stability and model performance in practical federated environments. The effectiveness of FedWCM-X under non-uniform data quantity settings is demonstrated through experiments presented in Appendix~A~\cite{anonymous2025appendix}.

\section{ Convergence Analysis}
\label{sec:convergence}

In this section, we prove that the convergence rate of FedWCM is the same as FedAvg-M~\cite{cheng2023momentum}, e.g., $\sqrt{\frac{L \Delta \sigma^2}{NKR}} + \frac{L \Delta}{R}$.

\begin{assumption}[Standard Smoothness]
\label{ass:smoothness}
Each local objective function $f_i$ is $L$-smooth, i.e., for any $x, y \in \mathbb{R}^d$ and $1 \leq i \leq N$, we have
\begin{equation}
    \|\nabla f_i(x) - \nabla f_i(y)\| \leq L \|x - y\|.
\end{equation}
\end{assumption}

\begin{assumption}[Stochastic Gradients]
\label{ass:stochastic}
There exists $\sigma \geq 0$ such that for any $x \in \mathbb{R}^d$ and $1 \leq i \leq N$, we have
\begin{equation}
    \mathbb{E}_{\xi_i} [\nabla F(x; \xi_i)] = \nabla f_i(x),
\end{equation}
and
\begin{equation}
    \mathbb{E}_{\xi_i} [\|\nabla F(x; \xi_i) - \nabla f_i(x)\|^2] \leq \sigma^2,
\end{equation}
where $\xi_i \sim \mathcal{D}_i$ are independent and identically distributed.
\end{assumption}

Under the assumptions stated above, we provide the convergence result for FedWCM with adaptive $\beta$ and weighted aggregation. The following theorem summarizes the key convergence guarantee.

\begin{theorem}[Convergence of FedWCM]
\label{thm:convergence}
Let $f(x)$ be the global objective function. Under Assumptions~\ref{ass:smoothness} and~\ref{ass:stochastic}, with $g_0 = 0$, $\beta \leq \sqrt{\frac{NKL\Delta}{\sigma^2 R}}$ for any constant $c \in (0, 1]$, $\gamma = \min \left( \frac{1}{24L}, \frac{\beta}{6L} \right)$, and $\eta KL \lesssim \min \left( 1, \frac{1}{\beta \gamma L R}, \left( \frac{L \Delta}{G_0\beta^3 R} \right)^{1/2}, \frac{1}{(\beta N)^{1/2}}, \frac{1}{(\beta^3 N K)^{1/4}} \right)$, FedWCM achieves the following convergence rate:
\begin{equation}
    \frac{1}{R} \sum_{r=0}^{R-1} \mathbb{E}[\|\nabla f(x_r)\|^2] \lesssim \sqrt{\frac{L \Delta \sigma^2}{NKR}} + \frac{L \Delta}{R},
\end{equation}
where $\Delta = f(x_0) - \min_x f(x)$, and $\lesssim$ absorbs constant numerical factors. Here, $G_0 := \frac{1}{N} \sum_{i=1}^{N} \|\nabla f_i(x_0)\|^2$ represents the average squared norm of the client gradients at the initial point $x_0$.
\end{theorem}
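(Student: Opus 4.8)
The plan is to show that FedWCM is, up to bounded multiplicative perturbations, an instance of FedCM / FedAvg-M, and then re-run the momentum-based convergence argument of \cite{cheng2023momentum,xu2021fedcm} while tracking how the two new ingredients---the per-round adaptive momentum $\alpha_r$ and the score-based aggregation weights $w_k^r$---affect the constants. Concretely, I would first fix notation for the auxiliary (virtual) sequences: the global iterate $x_r$, the aggregated momentum $\Delta_r$, and an ``ideal'' direction $\bar g_r := \frac{1}{N}\sum_i \nabla f_i(x_r)$, which coincides with $\nabla f(x_r)$ under the (near-)uniform data-volume regime that FedWCM targets. The initialization $g_0 = 0$ is used exactly as in FedCM to annihilate the first-round momentum term. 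The two workhorse lemmas are (i) a \emph{descent lemma} from $L$-smoothness, $f(x_{r+1}) \le f(x_r) - \eta \langle \nabla f(x_r), \Delta_{r+1}\rangle + \tfrac{L\eta^2}{2}\|\Delta_{r+1}\|^2$ (using $x_{r+1} = x_r - \eta_g \Delta_{r+1}$), and (ii) a \emph{momentum-error lemma} bounding $\mathbb{E}\|\Delta_r - \nabla f(x_r)\|^2$ by a sum of three contributions: fresh stochastic noise of order $\sigma^2/(NK)$, client drift accumulated over the $K$ local steps (order $\gamma^2 L^2$ times gradient norms, which is what forces $\gamma \le \min(1/24L, \beta/6L)$), and a momentum-lag term proportional to $\|x_r - x_{r-1}\|^2$ that a telescoping argument absorbs once $\beta$ and $\eta K L$ are small.

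Second, I would handle the adaptive momentum. Equation~(5) forces $\alpha_r \in [\alpha_0, 1] = [0.1, 1]$ for every $r$ (the exponential and the ratio $q_r$ both lie in $[0,1]$), so every step of the FedCM proof that uses ``$\alpha$'' or ``$1-\alpha$'' can be replaced by the corresponding extreme; the parameter $\beta$ in the theorem statement thus plays the role of a uniform surrogate for the momentum coefficient and no surviving $r$-dependence obstructs the telescoping. The one subtlety is that $\alpha_r$ is random, depending on $\mathcal{P}_r$ through $q_r$; I would resolve this by observing that $\alpha_r$ is measurable with respect to the information available at the \emph{start} of round $r$ (the fixed global distribution plus the sampling of $\mathcal{P}_r$), hence independent of the mini-batch noise drawn during that round, so the conditional-expectation steps over $\xi$ carry through verbatim. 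For the weighted aggregation I would use that the softmax weights obey $w_k^r \le c_w/|\mathcal{P}_r|$ for a constant $c_w$ depending only on the temperature range, so $\Delta_{r+1} = \tfrac{1}{\eta_l B}\sum_k w_k^r \Delta_k^r$ is within a bounded factor in norm of the uniform aggregate, and the residual bias $\big\|\sum_k w_k^r \nabla f_k(x_r) - \nabla f(x_r)\big\|$ is handled either by working throughout with the reweighted objective $f^r := \sum_k w_k^r f_k$ and showing $\nabla f^r$ stays within $O(\|x_r - x_{r-1}\|)$ of the running average, or by folding it into a $G_0$-type initial term (which is why $G_0 := \frac{1}{N}\sum_i\|\nabla f_i(x_0)\|^2$ appears in the step-size conditions).

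The main obstacle I anticipate is precisely this weighted-aggregation bias, because Theorem~\ref{thm:convergence} assumes only smoothness and bounded stochastic variance---there is no bounded-heterogeneity hypothesis---so one cannot naively bound $\|\nabla f_k(x) - \nabla f(x)\|$. I expect the argument to lean on the momentum mechanism itself: since $\Delta_r$ is a convex combination of past per-client updates, the biased-weights error enters only through differences between consecutive weight vectors and consecutive iterates, both small under the stated step-size regime, and hence contributes a higher-order term dominated by the $L\Delta/R$ rate. A secondary, more mechanical obstacle is checking that the conjunction of constraints on $\gamma$, $\beta$, and $\eta K L$ is simultaneously feasible and that substituting the optimal choice $\beta \asymp \sqrt{NKL\Delta/(\sigma^2 R)}$ into the telescoped inequality collapses the bound to exactly $\sqrt{L\Delta\sigma^2/(NKR)} + L\Delta/R$; this is the standard end-game of FedCM-type proofs, which I would finish by collecting the noise, drift, and lag terms after plugging in the optimized $\beta$.
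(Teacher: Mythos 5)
Your proposal follows the same overall architecture as the paper's proof: both re-run the FedAvg-M/FedCM analysis, setting up a momentum-error recursion for $\mathbb{E}\|\nabla f(x_r)-g_{r+1}\|^2$, a client-drift bound over the $K$ local steps (which is what forces $\gamma\le\min(1/24L,\beta/6L)$), and a telescoping step closed by substituting $\beta \asymp \sqrt{NKL\Delta/(\sigma^2 R)}$; the adaptive momentum is handled in both cases simply by carrying $\alpha_r\in[0.1,1)$ through the recursions under the constraint $\gamma L\le \alpha_r/6$, so on that point you and the paper coincide. Where you genuinely diverge is on the weighted aggregation, which you correctly flag as the crux given that the theorem assumes no bounded-heterogeneity condition. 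You propose to bound the softmax weights' deviation from uniform ($w_k^r\le c_w/|\mathcal{P}_r|$) and to track the residual bias $\|\sum_k w_k^r\nabla f_k(x_r)-\nabla f(x_r)\|$ explicitly, either through a reweighted objective or by folding it into a $G_0$-type term. The paper instead proves a dedicated weighting lemma asserting that the score-weighted average of local gradients is \emph{closer} to $\nabla f(x_r)$ than the unweighted average---arguing via the rearrangement inequality under an assumed inverse ordering between the weights $w_i$ and the per-client deviations $\Delta_i$---and then reuses the unweighted FedCM bound verbatim inside the momentum-error recursion, so no bias term ever appears. The paper's route is shorter but rests on that monotonicity premise and on treating vector-valued deviations as if they were totally ordered scalars; your route is more laborious but only needs the weighting to be a bounded perturbation rather than provably beneficial, which is arguably the more robust way to close the step you identify as the main obstacle. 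Be aware, though, that your claim that the bias ``enters only through differences between consecutive weight vectors and consecutive iterates'' needs care: the scores $s_k$ are computed once from the data, so the reweighting is essentially static and the bias is a genuine reweighting of the objective, not a vanishing increment---this is precisely the gap the paper's weighting lemma is meant to paper over, and your argument would need to confront it directly.
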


The proof of Theorem~\ref{thm:convergence} extends the standard convergence analysis of FedAvg-M \cite{cheng2023momentum} with two key modifications: an adaptive $\beta$ and weighted aggregation. The parameter $\beta$ is dynamically adjusted during training, constrained within $\left[ 0.1, 1 \right)$, and satisfies $\beta \leq \sqrt{\frac{NKL\Delta}{\sigma^2 R}}$, ensuring stability across different data distributions. Additionally, the adaptive aggregation weights mitigate the bias term $U_r$ by accounting for discrepancies between local and global data distributions. These changes preserve the original convergence guarantees while improving adaptability. Further details, including bias analysis, are provided in Appendix E~\cite{anonymous2025appendix}.
\section{Experiments}


\subsection{Experimental Setup}

\begin{table*}[ht]
    \centering
    \scriptsize 
    \setlength{\tabcolsep}{2pt} 
    \caption{Performance comparison under $\beta=0.6$ and $\beta=0.1$ across different datasets and imbalance factors (IF). We report the mean test accuracy under 3 trials on different random seeds.}
    \begin{tabular}{c c *{14}{>{\centering\arraybackslash}m{0.055\linewidth}}} 
        \toprule
        \multicolumn{2}{c}{} & \multicolumn{2}{c}{FedAvg} & \multicolumn{2}{c}{BalanceFL} & \multicolumn{2}{c}{FedCM} & \multicolumn{2}{c}{FedCM + Focal Loss} & \multicolumn{2}{c}{FedCM + Balance Loss} & \multicolumn{2}{c}{FedCM + Balance Sampler} & \multicolumn{2}{c}{FedWCM} \\
        \cmidrule(lr){3-4} \cmidrule(lr){5-6} \cmidrule(lr){7-8} \cmidrule(lr){9-10} \cmidrule(lr){11-12} \cmidrule(lr){13-14} \cmidrule(lr){15-16}
        Dataset & IF & 0.6 & 0.1 & 0.6 & 0.1 & 0.6 & 0.1 & 0.6 & 0.1 & 0.6 & 0.1 & 0.6 & 0.1 & 0.6 & 0.1 \\
        \midrule
        \multirow{5}{*}{Fashion-MNIST} 
        & 1    & \textbf{0.8800} & 0.8074  & 0.8795 & \textbf{0.8443}  & 0.8419 & 0.7604  & 0.8246 & 0.6931  & 0.7451 & 0.6907  & 0.8546 & 0.7821  & 0.8625 & 0.8181 \\
        & 0.5  & \textbf{0.8688} & 0.8079  & 0.8638 & 0.8462  & 0.8601 & \textbf{0.8544}  & 0.8363 & 0.7058  & 0.7622 & 0.6737  & 0.8476 & 0.7906  & 0.8659 & 0.8366 \\
        & 0.1  & 0.8450 & 0.8313  & \textbf{0.8497} & \textbf{0.8475}  & 0.8211 & 0.8268  & 0.8161 & 0.8065  & 0.7873 & 0.8002  & 0.8245 & 0.8252  & 0.8469 & 0.8328 \\
        & 0.05 & 0.8318 & 0.8408  & \textbf{0.8520} & \textbf{0.8545}  & \underline{0.3914} & \underline{0.4975}  & \underline{0.1945} & \underline{0.1967}  & \underline{0.4335} & \underline{0.4064}  & \underline{0.5474} & \underline{0.5273}  & 0.8499 & 0.8426 \\
        & 0.01 & 0.7871 & 0.7894  & \textbf{0.8192} & \textbf{0.8126}  & 0.7378 & 0.7524  & 0.7188 & 0.7265  & 0.8039 & 0.8011  & 0.8027 & 0.8030  & 0.7882 & 0.7947 \\
        \midrule
        \multirow{5}{*}{SVHN} 
        & 1    & 0.9361 & 0.8986  & \textbf{0.9370} & 0.9146  & 0.9246 & 0.8836  & 0.9242 & 0.8749  & 0.8928 & 0.8312  & 0.9310 & 0.8911  & 0.9355 & \textbf{0.9276} \\
        & 0.5  & 0.9251 & 0.9271  & 0.9261 & 0.9243  & 0.7137 & 0.6981  & 0.6068 & 0.5961  & 0.5594 & 0.5870  & 0.7085 & 0.6761  & \textbf{0.9324} & \textbf{0.9284} \\
        & 0.1  & 0.8681 & 0.8741  & 0.8979 & 0.8976  & \underline{0.1594} & \underline{0.0670}  & \underline{0.1959} & \underline{0.1959}  & \underline{0.0762} & \underline{0.1322}  & \underline{0.1959} & \underline{0.1976}  & \textbf{0.9057} & \textbf{0.9024} \\
        & 0.05 & 0.8594 & 0.8647  & 0.8675 & 0.8709  & \underline{0.3100} & \underline{0.0723}  & \underline{0.0670} & \underline{0.1107}  & \underline{0.0969} & \underline{0.0909}  & \underline{0.1802} & \underline{0.0932}  & \textbf{0.8759} & \textbf{0.8836} \\
        & 0.01 & 0.7884 & 0.7803  & 0.7901 & 0.7954  & \underline{0.0670} & \underline{0.0670}  & \underline{0.0670} & \underline{0.0751}  & \underline{0.0760} & \underline{0.0759}  & \underline{0.1736} & \underline{0.2427}  & \textbf{0.7998} & \textbf{0.8408} \\
        \midrule
        \multirow{5}{*}{CIFAR-10} 
        & 1    & 0.7906 & 0.6881  & 0.7629 & 0.6813  & 0.8126 & 0.7092  & 0.8040 & 0.6937  & 0.7931 & 0.7169  & 0.8065 & 0.7198  & \textbf{0.8242} & \textbf{0.7337} \\
        & 0.5  & 0.7535 & 0.7183  & 0.7539 & 0.7429  & 0.6793 & 0.6686  & 0.6565 & 0.6319  & 0.6877 & 0.6924  & 0.6968 & 0.6590  & \textbf{0.7926} & \textbf{0.7968} \\
        & 0.1  & 0.6232 & 0.6775  & 0.6380 & 0.6541  & \underline{0.2175} & \underline{0.2393}  & \underline{0.1311} & \underline{0.3095}  & \underline{0.1864} & \underline{0.3016}  & \underline{0.2871} & \underline{0.3994}  & \textbf{0.6905} & \textbf{0.7207} \\
        & 0.05 & 0.5715 & 0.5642  & 0.5652 & 0.5535  & \underline{0.2274} & \underline{0.2358}  & \underline{0.2005} & \underline{0.1413}  & \underline{0.2680} & \underline{0.2525}  & \underline{0.1427} & \underline{0.1315}  & \textbf{0.6006} & \textbf{0.6132} \\
        & 0.01 & 0.4567 & 0.4600  & 0.4731 & 0.4616  & \underline{0.1865} & \underline{0.2312}  & \underline{0.1687} & \underline{0.2023}  & \underline{0.2087} & \underline{0.2405}  & \underline{0.1249} & \underline{0.1584}  & \textbf{0.4983} & \textbf{0.5012} \\
        \midrule
        \multirow{5}{*}{CIFAR-100} 
        & 1    & 0.4297 & 0.3731  & 0.3691 & 0.3232  & 0.4129 & \underline{0.2400}  & 0.3990 & \underline{0.2357}  & 0.3630 & \underline{0.2089}  & 0.3599 & \underline{0.2339}  & \textbf{0.4545} & \textbf{0.3858} \\
        & 0.5  & 0.3545 & 0.3882  & 0.3203 & 0.3639  & 0.2996 & 0.4200  & 0.3058 & 0.3853  & 0.2694 & 0.3722  & 0.2835 & 0.3790  & \textbf{0.4195} & \textbf{0.4202} \\
        & 0.1  & 0.2839 & 0.2744  & 0.2440 & 0.2407  & 0.2948 & 0.3135  & 0.3014 & 0.3166  & 0.2952 & 0.3156  & 0.2952 & 0.2955  & \textbf{0.3150} & \textbf{0.3235} \\
        & 0.05 & 0.2155 & 0.2300  & 0.2070 & 0.2157  & \underline{0.1130} & 0.2695  & \underline{0.0100} & 0.2806  & \underline{0.1000} & 0.2786  & \underline{0.0930} & 0.2721  & \textbf{0.2573} & \textbf{0.2832} \\
        & 0.01 & 0.1663 & 0.1885  & 0.1565 & 0.1609  & \underline{0.0116} & 0.1035  & \underline{0.0109} & 0.1027  & \underline{0.0100} & 0.1286  & \underline{0.0100} & 0.0723  & \textbf{0.1985} & \textbf{0.2005} \\
        \midrule
        \multirow{5}{*}{ImageNet} 
        & 1    & 0.2760 & 0.2290  & 0.2292 & 0.1947  & 0.2479 & \underline{0.1408}  & 0.2438 & \underline{0.1222}  & 0.2082 & \underline{0.1024}  & 0.2134 & \underline{0.1155}  & \textbf{0.3094} & \textbf{0.2462} \\
        & 0.5  & 0.2154 & 0.2140  & 0.1628 & 0.2124  & \underline{0.1045} & \underline{0.0392}  & \underline{0.0923} & \underline{0.0695}  & \underline{0.0928} & \underline{0.0544}  & \underline{0.1154} & \underline{0.1067}  & \textbf{0.2598} & \textbf{0.2198} \\
        & 0.1  & 0.1631 & 0.1535  & 0.1124 & 0.1161  & 0.1796 & 0.1738  & 0.1864 & 0.1763  & 0.1796 & 0.1788  & 0.1528 & 0.1521  & \textbf{0.1923} & \textbf{0.1874} \\
        & 0.05 & 0.1458 & 0.1355  & 0.0915 &  0.0998  & \underline{0.0052} & 0.1597  & 0.1355 & 0.1448  & 0.1471 & 0.1576  & 0.1130 & 0.1542  & \textbf{0.1626} & \textbf{0.1660} \\
        & 0.01 & 0.0882 & 0.1123  & 0.0627 & 0.0612  & \underline{0.0050} &  0.1137   & \underline{0.0063} & 0.1354  & \underline{0.0050} & 0.1209  & \underline{0.0052} & 0.1217  & \textbf{0.0974} & \textbf{0.1383} \\
        \bottomrule
    \end{tabular}
    \label{tab:performance_comparison}
\end{table*}

We mainly conduct experiments on Fashion-MNIST~\cite{xiao2017fashion}, SVHN~\cite{netzer2011reading}, CIFAR-10~\cite{krizhevsky2009learning}, CIFAR-100~\cite{krizhevsky2009learning} and ImageNet~\cite{deng2009imagenet} datasets. For the data partition, we adopt the partition strategy as shown in the preliminary part. By default, we set $p_{k,c} \sim \text{Dir}(\beta)$, where $\beta = 0.1$. For the imbalanced ratio $IF$, the default setting is $IF = 0.1$. Note that smaller $\beta$ denotes worse skews while smaller $IF$ denotes higher class imbalance level.


By default, we use a multilayer perceptron (MLP) architecture for the Fashion-MNIST dataset. For the SVHN and CIFAR-10 datasets, we employ ResNet-18~\cite{he2016deep} as the backbone network. For the CIFAR-100 and ImageNet datasets, we use ResNet-34~\cite{he2016deep}. We set the batch size to 50, the local learning rate $\eta_l$ to 0.1, the global learning rate $\eta_g$ to 1, and the local epoch to 5. By default, the number of clients is set to 100 with a participation rate of 0.1 and 500 communication rounds for each experiment. For CIFAR-100 and ImageNet, the default number of clients is set to 40, and we conduct 300 communication rounds. Additional experimental settings can be found in the corresponding figures/tables. Note that all experiments were implemented in PyTorch and conducted on a workstation equipped with four NVIDIA GeForce RTX 3090 GPUs.

\subsection{Overall Accuracy Evaluation}

In this experiment, we compare FedWCM against several representative baselines to evaluate its effectiveness in handling long-tailed and heterogeneous federated learning scenarios. We consider two main categories of methods: (1) standard and long-tail-specific federated learning methods, including FedAvg~\cite{mcmahan2017communication}, BalanceFL~\cite{shuai2022balancefl}, and FedGrab~\cite{xiao2024fed}; (2) improved variants of FedCM incorporating common imbalance handling techniques such as Focal Loss~\cite{lin2017focal}, Balance Loss~\cite{hong2021disentangling}, and Balance Sampler~\cite{he2009learning}. Experiments are conducted on Fashion-MNIST, SVHN, CIFAR-10, CIFAR-100, and ImageNet, under Dirichlet distributions with $\beta=0.1$ and $\beta=0.6$ to simulate varying degrees of non-IID data.

Table~\ref{tab:performance_comparison} summarizes the results on CIFAR-10. Across almost all settings, FedWCM achieves the highest accuracy, demonstrating strong generalization ability and robustness under both mild and severe imbalance. For instance, at $\beta=0.6$, $IF=0.1$, FedWCM reaches 0.6905 accuracy, outperforming FedAvg (0.6232), BalanceFL (0.638), and FedGrab (0.326). The performance gap becomes more pronounced as imbalance increases.

While FedGrab performs competitively under moderate heterogeneity (e.g., $IF=1$, $IF=0.5$), it degrades significantly in highly imbalanced or heterogeneous settings. Particularly under $\beta=0.1$, its accuracy drops sharply — for example, only 32.60\% at $IF=0.1$, compared to FedAvg's 67.75\% and FedWCM's 72.07\%. This highlights FedGrab's sensitivity to severe non-IID distributions and limited robustness.

FedAvg, while not specifically designed for long-tail data, provides a stable baseline across most scenarios, though it generally underperforms compared to more specialized methods. BalanceFL shows slight improvements over FedAvg, but its performance remains inconsistent.

As for FedCM and its variants, these methods generally fail to converge or produce very low accuracy in long-tailed settings. FedCM alone yields only 0.2175 accuracy in some cases. Even with the introduction of Focal Loss, Balance Loss, or Balance Sampler, accuracy remains low (e.g., 0.1311, 0.1864, and 0.2871, respectively), indicating that these improvements are insufficient to resolve convergence issues in the presence of severe imbalance.

It is also worth noting that the effectiveness of FedWCM varies depending on the dataset and model complexity. On Fashion-MNIST, the simpler 3-layer MLP architecture limits the benefits of momentum-based strategies. Moreover, datasets with more classes (e.g., CIFAR-100 and ImageNet) show slightly reduced performance gains for FedWCM due to diluted long-tail effects across many classes.

In summary, FedWCM consistently outperforms existing methods, including FedGrab and FedAvg, particularly under harsh conditions. Its adaptive momentum and weight correction mechanism ensure stable and accurate learning in federated environments with long-tailed, heterogeneous data.

\begin{table}[t]
    \centering
    \footnotesize
    \setlength{\tabcolsep}{3pt}
    \caption{Performance comparison on CIFAR-10 under $\beta=0.6$ and $\beta=0.1$ with varying imbalance factors (IF). Results are averaged over 3 runs with different seeds.}
    \begin{tabular}{c c *{6}{>{\centering\arraybackslash}m{0.09\linewidth}}}
        \toprule
        \multicolumn{2}{c}{} & \multicolumn{2}{c}{FedAvg} & \multicolumn{2}{c}{FedGrab} & \multicolumn{2}{c}{FedWCM} \\
        \cmidrule(lr){3-4} \cmidrule(lr){5-6} \cmidrule(lr){7-8}
        Dataset & IF & 0.6 & 0.1 & 0.6 & 0.1 & 0.6 & 0.1 \\
        \midrule
        \multirow{5}{*}{CIFAR-10} 
        & 1    & 0.7906 & 0.6881  & 0.7950 & 0.6813  & \textbf{0.8242} & \textbf{0.7337} \\
        & 0.5  & 0.7535 & 0.7183  & 0.7810 & 0.6560  & \textbf{0.7926} & \textbf{0.7968} \\
        & 0.1  & 0.6232 & 0.6775  & 0.6880 & 0.3260  & \textbf{0.6905} & \textbf{0.7207} \\
        & 0.05 & 0.5715 & 0.5642  & 0.5000 & 0.1870  & \textbf{0.6006} & \textbf{0.6132} \\
        & 0.01 & 0.4567 & 0.4600  & 0.3140 & 0.1350  & \textbf{0.4983} & \textbf{0.5012} \\
        \bottomrule
    \end{tabular}
    \label{tab:hyperparams}
\end{table}


\begin{figure}[ht]
	\centering
	\includegraphics[width=1\linewidth]{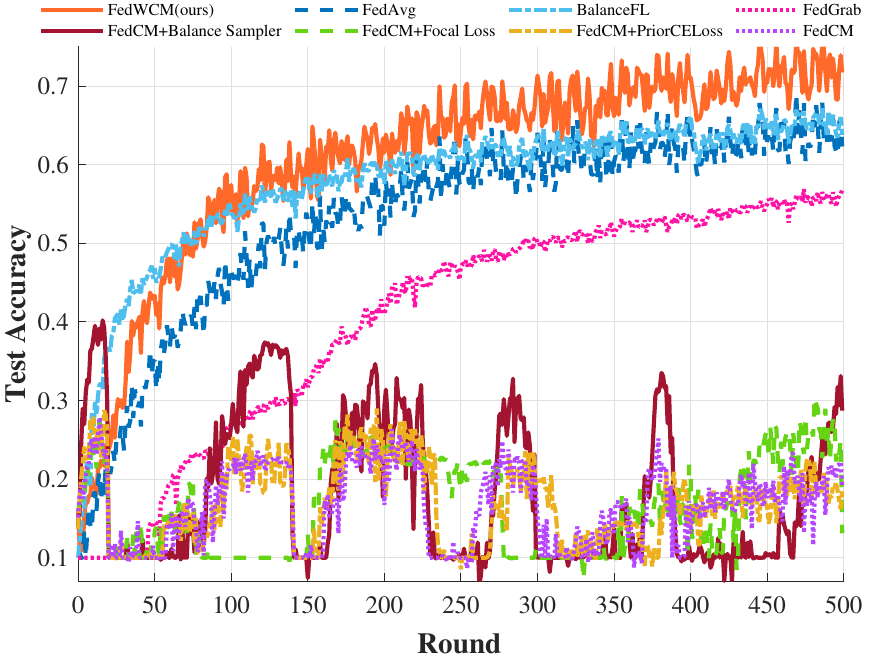}
	\caption{Test accuracy on various methods ($\beta=0.6$, $IF=0.1$).}
	\label{fig:comparison_IF0.1_Dir0.6}
\end{figure}

\subsection{Efficiency Evaluation of FedWCM}
\paragraph{Convergence experiments.}
As illustrated in Figure \ref{fig:comparison_IF0.1_Dir0.6}, under the settings of $\beta= 0.6$ and $IF = 0.1$, FedWCM in the orange line demonstrates a very rapid convergence in test accuracy during the early stages, quickly achieving a high accuracy level. FedWCM’s performance continues to improve, maintaining a high accuracy range, and eventually stabilizing around 400 iterations, reaching a test accuracy of 75\%. Compared to other methods such as FedAvg and FedGrab, FedWCM consistently maintains a leading position throughout most of the iterations. Conversely, different variants of FedCM exhibit characteristics of non-convergence. 

Besides, compared to most algorithms, FedWCM demonstrates a rapid increase in test accuracy during the initial stages of iteration, surpassing the 60\% accuracy threshold at around 120 iterations. In comparison, BalanceFL approaches this level at around 200 iterations, while FedAvg requires approximately 300 iterations to reach the same accuracy. In contrast, FedGrab fails to achieve the 60\% accuracy threshold even after more than 500 iterations. Additionally, the convergence speed of FedWCM is significantly faster than that of other converging algorithms.

\paragraph{Per-label accuracy pattern.}
As shown in Figure \ref{fig:per_class_accuracy}, with $IF=0.1$ and $\beta=0.6$, FedWCM achieves high accuracy on minor labels, notably outperforming FedAvg and FedCM on labels 6, 7, 8, and 9. Figure 2 shows the global label distribution, where label 1 is the most frequent and label 9 the least. This highlights FedWCM's strength in handling long-tailed distributions. In contrast, FedCM's accuracy drops sharply with decreasing label frequency, nearing zero for label 9.

\begin{figure}[!t]
\centering
\includegraphics[width=1\linewidth]{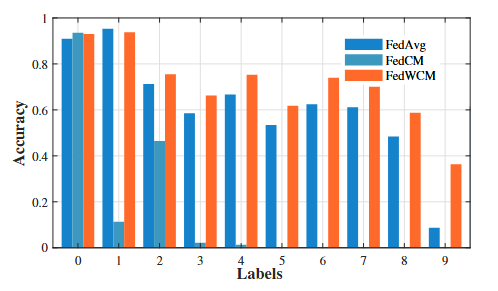}
\caption{Per-label accuracy comparison of various methods when $\beta=0.6$ and $IF=0.1$.}
\label{fig:per_class_accuracy}
\end{figure}

\subsection{Scalability Analysis}

\paragraph{Client Participation Rate.}

FedWCM maintains relatively high accuracy across all levels of client participation rates shown in Table \ref{tab:participation_rate_comparison}, particularly at lower participation rates (i.e, 5\% and 10\%), where its accuracy significantly surpasses that of FedAvg and FedCM. Although the accuracy in FedWCM gradually decreases as the participation rate increases (i.e., from 0.7127 at 5\% to 0.6025 at 80\%), the decline is more gradual compared to FedAvg and FedCM.

\begin{table}[!t]
\centering
\caption{Comparison under different client smapling rates.}
\label{tab:participation_rate_comparison}
\begin{tabular}{cccc}
\toprule
\textbf{Sampling Rate} & \textbf{FedAvg} & \textbf{FedCM} & \textbf{FedWCM} \\
\midrule
5\% & 0.6865 & \underline{0.3130} & \textbf{0.7127} \\
10\% & 0.6232 & \underline{0.1918} & \textbf{0.6905} \\
20\% & 0.6450 & \underline{0.3006} & \textbf{0.7164} \\
40\% & 0.6418 & \underline{0.2268} & \textbf{0.6933} \\
80\% & 0.6441 & \underline{0.1000} & \textbf{0.6980} \\
\bottomrule
\end{tabular}%
\end{table}

\paragraph{Total Client Number.}
\begin{figure}[!t]
    \centering
    \includegraphics[width=1\linewidth]{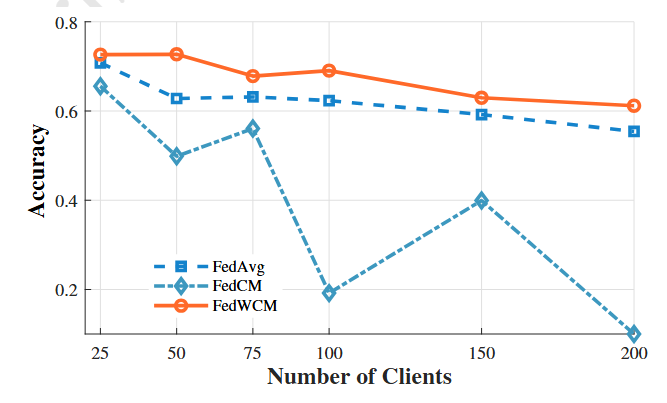}
    \caption{Test accuracy w.r.t the number of clients.}
    \label{fig:client_num_comparison}
\end{figure}

As shown in Figure~\ref{fig:client_num_comparison}, as the number of clients increases, the amount of data allocated to each client decreases, exacerbating data imbalance under the same $IF$, which leads to performance degradation for all three algorithms. Among them, FedWCM exhibits the slowest decline and maintains a relatively high accuracy (around 0.7). In contrast, FedAvg shows a significant performance drop when the number of clients reaches 50, while FedCM suffers from severe performance fluctuations as the number of clients increases, often resulting in non-convergence.



\subsection{Ablation Study}

\paragraph{Local Epochs.}
Figure~\ref{fig:local_rounds_comparison} compares the accuracy of FedAvg, FedCM, and FedWCM across different numbers of local epochs (1, 5, 10, 20 epochs). The results show that FedWCM consistently outperforms the other algorithms across all local round settings, with its accuracy significantly improving as the number of local epochs increases. This performance advantage is particularly evident in mid-to-high round settings. In contrast, FedAvg demonstrates relatively stable performance but consistently falls short of FedWCM, while FedCM displays considerable variability, with accuracy significantly lower than FedWCM across the settings.
\begin{figure}[!t]
    \centering
    \includegraphics[width=1\linewidth]{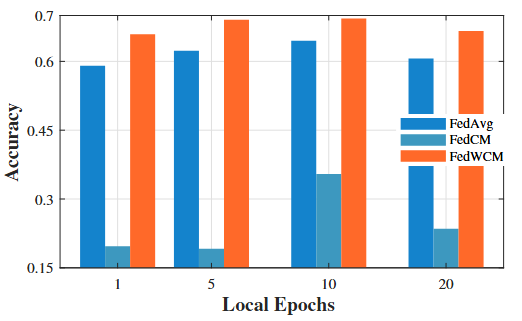}
    \caption{Test accuracy w.r.t local epochs.}
    \label{fig:local_rounds_comparison}
\end{figure}
\paragraph{Various settings of \texorpdfstring{$\beta$}{beta} and \texorpdfstring{IF}{IF}}


Table \ref{tab:comparison_fedavg_fedcm} presents the performance comparison of FedAvg, FedCM, and FedWCM when $\beta = 0.1$ and $\beta = 0.6$. The FedWCM algorithm demonstrates outstanding performance across all configurations, especially when addressing different Dirichlet parameters and varying information factors, even when other algorithms do not converge. FedWCM shows low sensitivity to the $\beta$ parameter, maintaining high-performance levels even under more dispersed data distributions. Furthermore, although all algorithms generally exhibit a performance decline as the information factor decreases, FedWCM experiences a relatively minor drop. Particularly at shallow $IF$ values (i.e., $IF=0.01$), FedWCM remains effective in adapting to sparse data environments, showing its superior adaptability.


\begin{table}[!t]
\centering
\caption{Comparison of various approaches under different settings.}
\label{tab:comparison_fedavg_fedcm}
\resizebox{\columnwidth}{!}{%
\begin{tabular}{cccccccc}
\toprule
\textbf{\(\beta = 0.1\)} & \textbf{IF} & \textbf{1} & \textbf{0.4} & \textbf{0.1} & \textbf{0.06} & \textbf{0.04} & \textbf{0.01} \\
\midrule
FedAvg & & 0.6859 & 0.7059 & 0.6228 & 0.6295 & 0.5358 & 0.4838 \\
FedCM & & 0.7179 & 0.7394 & \underline{0.2346} & \underline{0.2077} & \underline{0.2206} & \underline{0.2283} \\
FedWCM & & \textbf{0.7337} & \textbf{0.7735} & \textbf{0.6629} & \textbf{0.6538} & \textbf{0.5972} & \textbf{0.5078} \\
\midrule
\textbf{\(\beta = 0.6\)} & \textbf{IF} & \textbf{1} & \textbf{0.4} & \textbf{0.1} & \textbf{0.06} & \textbf{0.04} & \textbf{0.01} \\
\midrule
FedAvg & & 0.7912 & 0.7294 & 0.6232 & 0.5801 & 0.5543 & 0.4637 \\
FedCM & & 0.8104 & 0.7363 & \underline{0.1918} & \underline{0.2616} & \underline{0.1894} & \underline{0.2399} \\
FedWCM & & \textbf{0.8426} & \textbf{0.7969} & \textbf{0.6905} & \textbf{0.6216} & \textbf{0.6042} & \textbf{0.5164} \\
\bottomrule
\end{tabular}%
}
\end{table}

\subsection{Supplementary Experiments}

We provide several supplementary experiments~\cite{anonymous2025appendix} to further validate the effectiveness and robustness of our method. Appendix A presents results of FedWCM-X, a generalized extension of FedWCM designed for scenarios with unequal data volumes among clients, demonstrating its stable performance under such imbalance. Appendix B investigates neuron concentration patterns across FedAvg, FedCM, and FedWCM under varying data distributions, aiming to explore the root causes of non-convergence in long-tailed settings. Appendix C examines the resource overhead introduced when integrating FedWCM with homomorphic encryption, showing the approach remains feasible in privacy-sensitive environments. Appendix D further supplements the performance of momentum-based methods against other heterogeneous FL baselines.
\section{Conclusion}

In this work, we design a novel momentum-based federated learning algorithm called FedWCM to address the convergence challenges in long-tailed non-IID data distributions. By dynamically adjusting momentum aggregation and application, FedWCM effectively mitigates the negative impacts of skewed data distributions, resulting in improved convergence and performance across various datasets. Our extensive experiments demonstrate FedWCM's superiority over state-of-the-art algorithms, making it a robust solution for federated learning in complex and imbalanced data scenarios while leverages the advantages of momentum-based approach.



\section*{Acknowledgments}
\sloppy
The research was supported in part by the China NSFC Grant (No. 62372307, No. U2001207, No. 62472366), Guangdong NSF (No. 2024A1515011691), Shenzhen Science and Technology Program (No. RCYX20231211090129039), Shenzhen Science and Technology Foundation (No. JCYJ20230808105906014, No. ZDSYS20190902092853047), Guangdong Provincial Key Lab of Integrated Communication, Sensing and Computation for Ubiquitous Internet of Things (No. 2023B1212010007), 111 Center (No. D25008), and the Project of DEGP (No. 2023KCXTD042, No. 2024GCZX003). Lu WANG is the corresponding author.

\clearpage
\appendix
\section*{Appendix}
\section{Analysis of FedWCM under Different Data Partitioning}
We opted for a custom data partitioning approach to ensure that the data across clients is roughly consistent. Existing long-tailed heterogeneous datasets often present challenges due to extreme data imbalance. To verify the applicability of our method on other datasets, we implemented FedGrab~\cite{guo2022fedgrab}'s data partitioning and conducted comparative experiments.
\subsection{Problem Description}
%
There is currently no universal method for long-tailed heterogeneous partitioning. BalanceFL~\cite{shuai2022balancefl} uses its own long-tailed heterogeneous partitioning, while CLIP~\cite{zhang2023clip2fl} and Creff~\cite{liu2023creff} (using the same partitioning), FedGrab, and our approach first generate a long-tailed dataset, followed by Dirichlet partitioning. However, this methodology has a drawback: when partitioning a long-tailed dataset, the sampling from long-tailed data means that even with Dirichlet sampling, originally majority classes are likely to remain majority classes on clients.

CLIP and Creff, along with FedGrab, provide two partitioning methods, both generating Dirichlet distributions for each class and then allocating to clients. Yet, this can lead to some clients having no data (i.e., proportions in all classes are so low that they do not constitute even one data point, especially when the imbalance factor is small). To address this, the former repeatedly samples until the requirement is met, which may indirectly control the degree of long-tailed distribution, while the latter assigns at least one data point to each client.

%
%
%
%
For our method, the original momentum-based method~\cite{xu2021fedcm,cheng2023momentum} initially did not address the issue of inconsistent data quantity due to two reasons: 1) Solving heterogeneity issues does not necessarily require addressing quantity heterogeneity ~\cite{ye2023heterogeneous}, as they primarily target distributional heterogeneity. 2) The momentum base method introduces a fixed global momentum in a weighted manner each round. Therefore, when there is a large disparity in data quantity between clients, more data leads to more batches. This results in multiple additions of momentum in that client, negating the intended effect of reducing client variance.

Secondly, our main text introduces a method that weights based on data distribution. If we use a common method for addressing data quantity heterogeneity, weighting by data quantity, it may overlap with our method, preventing an effective analysis of our method's effects.

Lastly, our comparison in the main text is also justified, because if a method can address data quantity heterogeneity, it should also perform well in non-heterogeneous scenarios. Here, we supplement experiments with partitioning methods that increase data quantity disparity, demonstrating the applicability of our method under various data distributions.

%
%
%
%
%
%

\begin{algorithm}[ht]
\caption{FedWCM-X Algorithm}
\begin{algorithmic}
\State \textbf{Require:} initial model $x_0$, global momentum $\Delta_0$, $\alpha_0 = 0.1$, learning rates $\eta_l, \eta_g$, number of rounds $R$, local iterations $B$, standard iterations $\hat{B}$
\State Compute $\{s_k\}$ with $D_g$ using Equation (3)
\For{$r = 0$ to $R - 1$}
    \State Sample subset $\mathcal{P}_r$ of clients
    \For{Each client $k \in \mathcal{P}_r$}
        \State $x_{0,k}^r = x_r$
        \State $ \eta_l' = \eta_l \cdot \frac{\hat{B}}{B_k} $
        \For{$b = 0$ to $B_k - 1$}
            \State Compute $g_{b,k}^r = \nabla f_k(x_{b,k}^r,D_{b,k})$
            \State $v_{b,k}^r = \alpha_r g_{b,k}^r + (1 - \alpha_r)\Delta_r$
            \State $x_{b+1,k}^r = x_{b,k}^r - \eta_l' v_{b,k}^r$
        \EndFor
        \State $\Delta_k^r = x_{B_k,k}^r - x_r$
    \EndFor
    \State Compute $w_k^r$ using Equation (4)
    \State $ w_k'^r = w_k^r \cdot \frac{n_k}{\sum_{j} n_j} $
    \State Compute $\alpha_{r+1}$ using Equation (5)
    \State $\Delta_{r+1} = \frac{1}{\eta_l \hat{B}} \sum_{k \in \mathcal{P}_r} w_k'^r\Delta_k^r$
    \State $x_{r+1} = x_r - \eta_g \Delta_{r+1}$
\EndFor
\end{algorithmic}
\label{alg:main-X_alg}
\end{algorithm}
\subsection{Method Generalization}

Our original method assumed that clients have similar amounts of data. To address scenarios with significant disparities in client data quantities, we proposed an improved FedWCM for high quantity skew, naming FedWCM-X, as shown in Algorithm \ref{alg:main-X_alg}. In detail, we outline the two steps required for this extension:

1. Building upon the existing weighting, we additionally weight by data quantity. Specifically, if the current round's weight is $w_k$, we now multiply it by $\frac{n_k}{\sum_{j} n_j}$, where $n_k$ represents the data quantity of the $k$-th client.

$$ w_k' = w_k \cdot \frac{n_k}{\sum_{j} n_j} $$

2. We adjust the learning rate $\eta_l$ based on the batch numbers corresponding to different data quantities. This involves dividing $\eta_l$ by the current batch number $B_k$, and then multiplying by a standard batch number $\hat{B}$, which is the number of batches a client would have if the data were evenly distributed across clients.

$$ \eta_l' = \eta_l \cdot \frac{\hat{B}}{B_k} $$

\subsection{Experimental Section}

We illustrate the superiority of our method over six other approaches through an accuracy variation graph on this dataset( as shown in Figure ~\ref{fig:fedgrab_data_distribution}), using the partitioning strategy based on FedGrab~\cite{guo2022fedgrab}.
\begin{figure}[ht]
\centering
\includegraphics[width=\columnwidth]{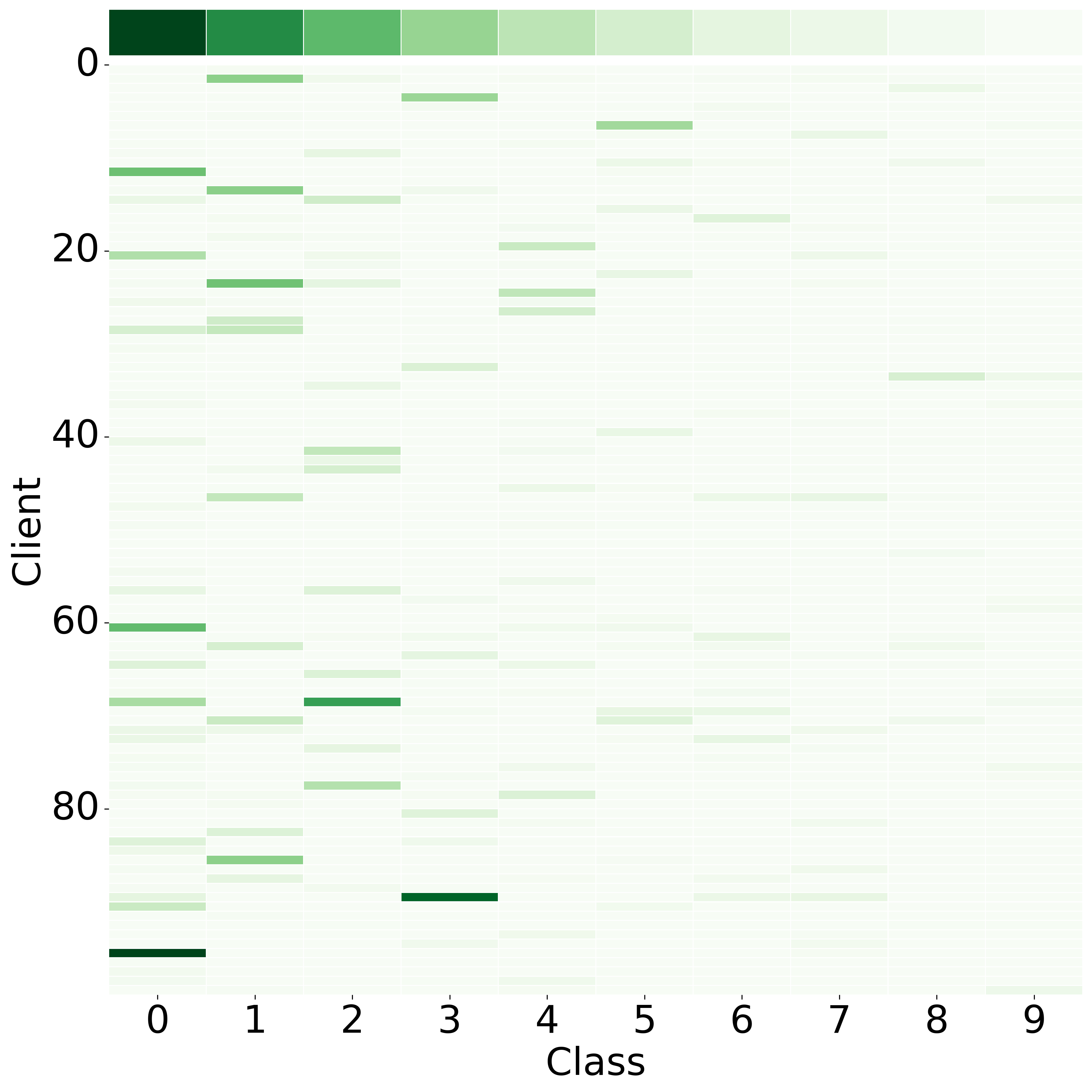}
\caption{Data distribution under the setting of $\beta=0.1$, $IF=0.1$, partitioned according to FedGrab.}
\label{fig:fedgrab_data_distribution}
\end{figure}

The dataset exhibits significant imbalance across clients, with approximately 10\% of clients holding over 50\% of the total samples, while around 40\% of clients possess less than 10\% of the samples. Additionally, certain clients predominantly represent a few classes, leading to skewed class distributions. This imbalance not only necessitates robust methods to ensure equitable participation and effective model aggregation across all clients, but also poses significant challenges for the implementation and weighting of momentum. In scenarios with uneven data quantities, momentum methods may be dominated by a few clients with large datasets, potentially impacting the overall model performance.

\begin{figure}[ht]
\centering
\includegraphics[width=\columnwidth]{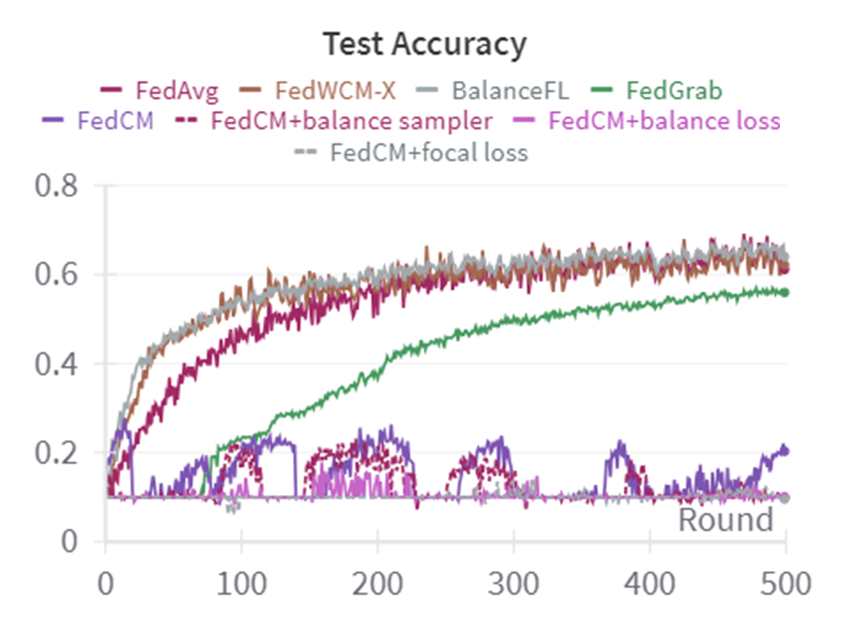}
\caption{Accuracy comparison of our method against six other methods on the dataset.}
\label{fig:accuracy_comparison}
\end{figure}

From Figure 2, we can observe that FedWCM in brown line maintains a high convergence speed in the early stages of training, and its final convergence accuracy is comparable to FedAvg in purple line and BalanceFL in gray line. The lack of its original significant advantage is attributed to the interference caused by weighting based on quantity and inherent weighting, which affects the gradient aggregation effectiveness. On the other hand, the average performance of FedGrab in green line may result from differences in settings such as batch size and number of clients compared to the original experiment. Other potential improvements based on FedCM do not converge.

Additionally, we compare FedAvg, FedCM, and FedWCM-X under this data partitioning scheme with IF settings of 1, 0.4, 0.1, 0.06, 0.04, and 0.01. The comparison is shown in the following table, focusing on the case where $\beta=0.1$.

\begin{table}[ht]
\centering
\vspace{-0.8em}
\caption{Comparison of various approaches under different settings of $IFs$ when $\beta = 0.1$.}
\label{tab:comparison_fedavg_fedcm_1}
\resizebox{\columnwidth}{!}{%
\begin{tabular}{cccccccc}
\toprule
\textbf{} & \textbf{IF} & \textbf{1} & \textbf{0.4} & \textbf{0.1} & \textbf{0.06} & \textbf{0.04} & \textbf{0.01} \\
\midrule
FedAvg & & 0.6802 & 0.7069 & 0.6219 & 0.577 & 0.5502 & 0.4905 \\
FedCM & & 0.6696 & \textbf{0.7405} & \underline{0.2095} & \underline{0.1527} & \underline{0.1438} & \underline{0.1438} \\
FedWCM-X & & \textbf{0.6895} & 0.7346 & \textbf{0.6236} & \textbf{0.5793} & \textbf{0.5632} & \textbf{0.4911} \\
\bottomrule
\end{tabular}%
}
\end{table}


As shown in Table~\ref{tab:comparison_fedavg_fedcm_1}, there are notable performance differences among various approaches under different settings. FedWCM-X consistently outperforms other methods in most scenarios, especially at lower $IF$ values, maintaining the highest accuracy. For instance, at $IF = 0.1$ and $IF = 0.04$, FedWCM-X achieves accuracies of 0.6236 and 0.5632, respectively, significantly surpassing other methods. In contrast, the performance of FedAvg decreases gradually as $IF$ decreases, whereas FedCM performs poorly at low $IF$ values, with a marked drop in accuracy.
%
\section{Exploration of Non-Convergence in Momentum-based Methods}

In this section, we explore the mechanisms behind the non-convergence of momentum-based methods under long-tailed distributions. Attempts to theoretically prove non-convergence have been challenging due to the complexity of deriving inevitable non-convergence conclusions. In centralized algorithms, as discussed in \cite{tang2020long}, causal inference has been used to qualitatively analyze the impact of momentum on imbalanced data, suggesting that "bad" effects can be removed while retaining the "good." This provides some theoretical support, indicating the adverse aspects of momentum on imbalanced data.

%
%

Our analysis is inspired by the Neural Collapse~\cite{fang2021exploring}~\cite{kothapalli2022neural}~\cite{yang2022inducing}, which provides a rigorous mathematical explanation. Neural Collapse describes a scenario in the terminal phase of training deep neural networks, where classifiers and output features form a special geometric structure called the Simplex Equiangular Tight Frame when training samples are balanced. This structure maximizes the angle between features and classifiers of different classes, minimizing inter-class confusion and explaining the excellent generalization and robustness of deep neural networks.

However, when sample numbers are imbalanced, this geometric structure is disrupted, leading to a new phenomenon called Minority Collapse~\cite{fang2021exploring}. Majority classes dominate the loss function, allowing their features and classifiers to span larger angles, while minority classes are compressed, reducing their angles. This phenomenon has been confirmed through mathematical analysis and experiments.


%

Based on this theory, we observed the concentration of neurons across different layers of the neural network during training. We introduce line charts depicting the average concentration changes over rounds for FedAvg, FedCM, and FedWCM under $\beta = 0.1$, $IF = 1$ on the left, and  $\beta = 0.1$, $IF = 0.1$ on the right.

\begin{figure}[h]
\centering
\includegraphics[width=0.48\columnwidth]{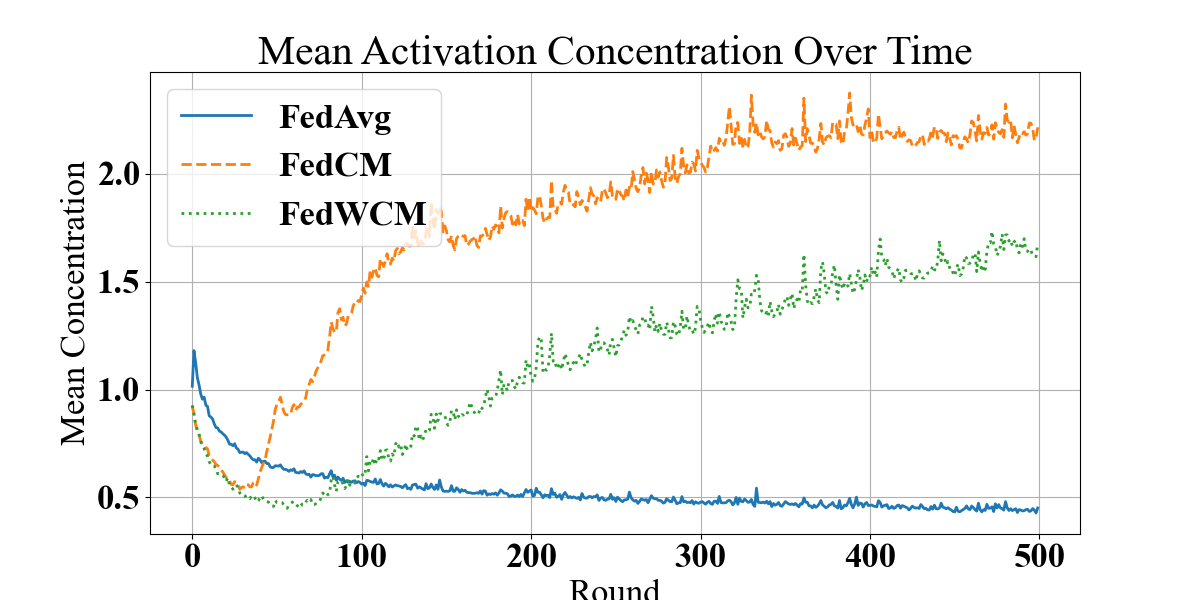} 
\includegraphics[width=0.48\columnwidth]{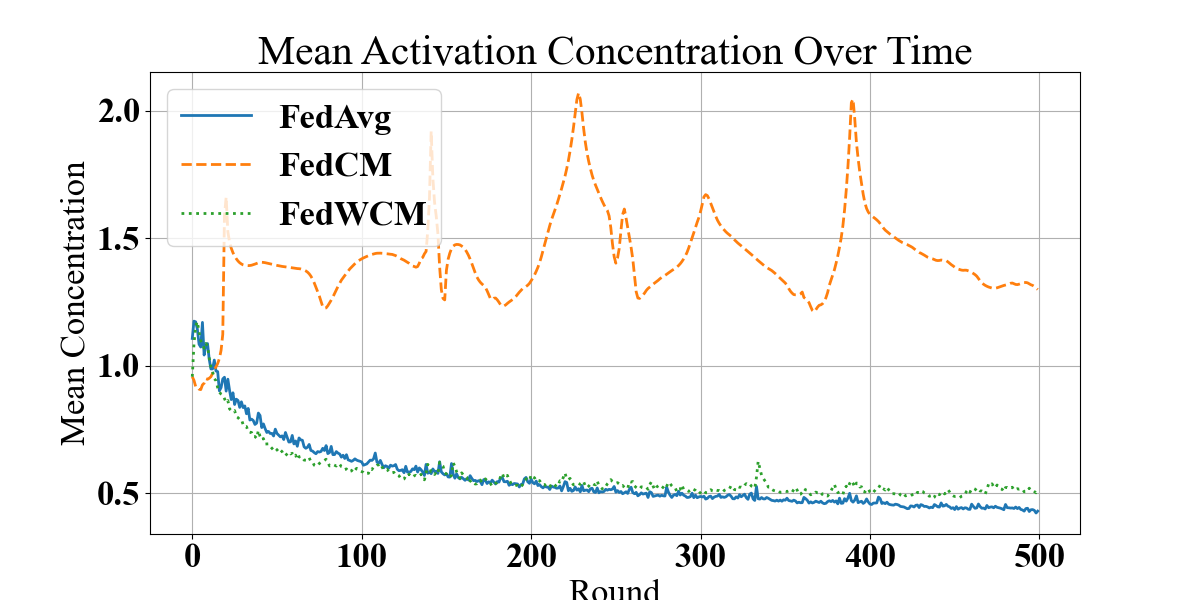} 
\caption{Average neuron concentration over rounds for FedAvg, FedCM, and FedWCM under different settings. Left: $\beta = 0.1$, $IF = 1$. Right: $\beta = 0.1$, $IF = 0.1$.}
\label{fig:average_concentration}
\end{figure}

From the left subfigure with $\beta=0.1$, $IF=0$, it can be observed that the average neuron concentration over rounds for FedAvg in blue line gradually decreases, whereas both FedCM in orange line and FedWCM in green line initially decrease and then increase. This may be due to the accumulation of certain neurons' advantages under the influence of momentum. Furthermore, the increase in FedWCM is relatively smooth. The reason for FedWCM's increase is that we adjust the distillation temperature based on the global imbalance; when the global distribution is fairly balanced, we avoid extreme weighting, as experiments have shown that FedCM performs well in non-long-tailed scenarios. From the right plot with $\beta=0.1$, $IF=0.1$, it is evident that both FedAvg in blue line  and FedWCM in green line exhibit a downward trend in average neuron concentration over rounds, with FedWCM declining faster and more smoothly. In contrast, FedCM in orange line shows periodic large fluctuations.

Next, we introduce three charts showing the detailed concentration changes for each method across layers.

\begin{figure}[ht]
\centering
\includegraphics[width=\columnwidth]{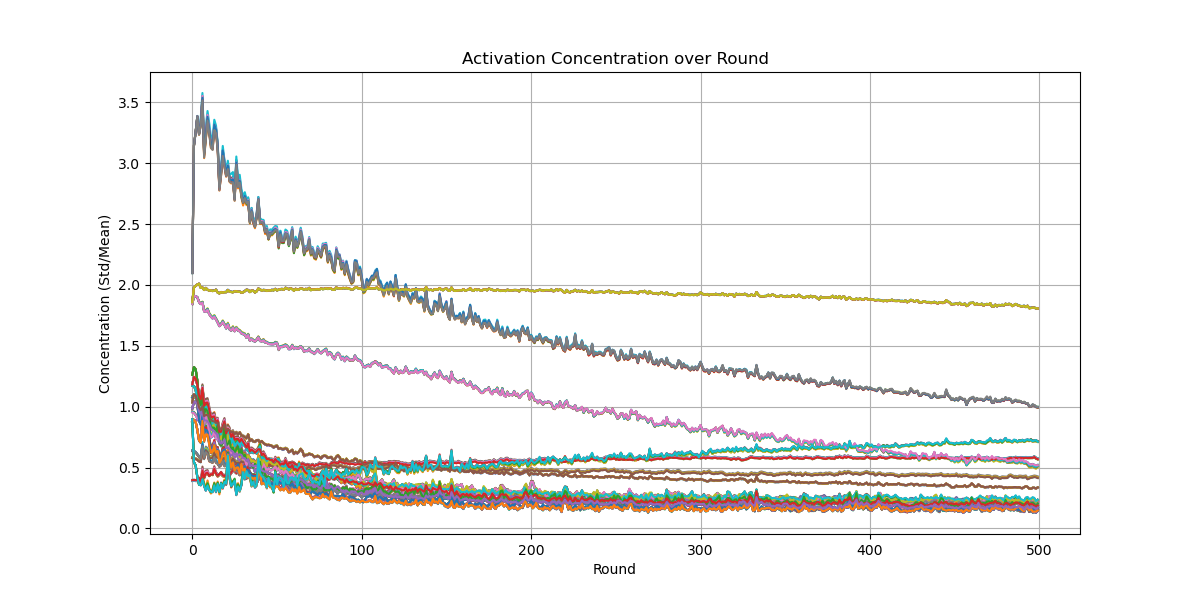} 
\caption{Detailed neuron concentration changes across layers for FedAvg.}
\label{fig:detailed_concentration_fedavg}
\end{figure}

\begin{figure}[ht]
\centering
\includegraphics[width=\columnwidth]{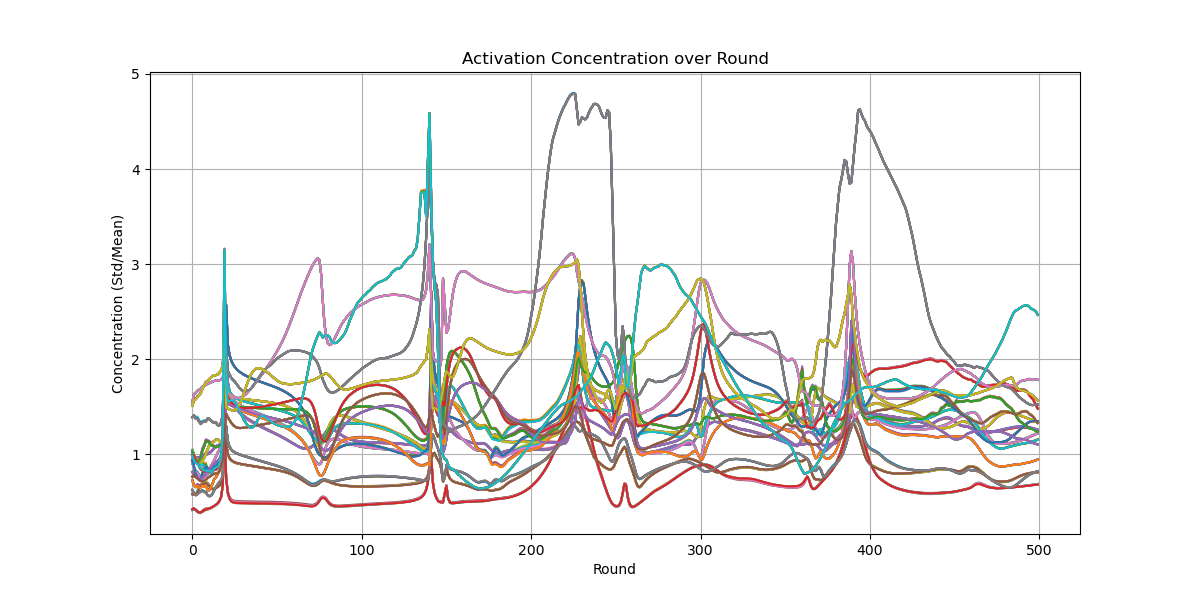} 
\caption{Detailed neuron concentration changes across layers for FedCM.}
\label{fig:detailed_concentration_fedcm}
\end{figure}

\begin{figure}[ht]
\centering
\includegraphics[width=\columnwidth]{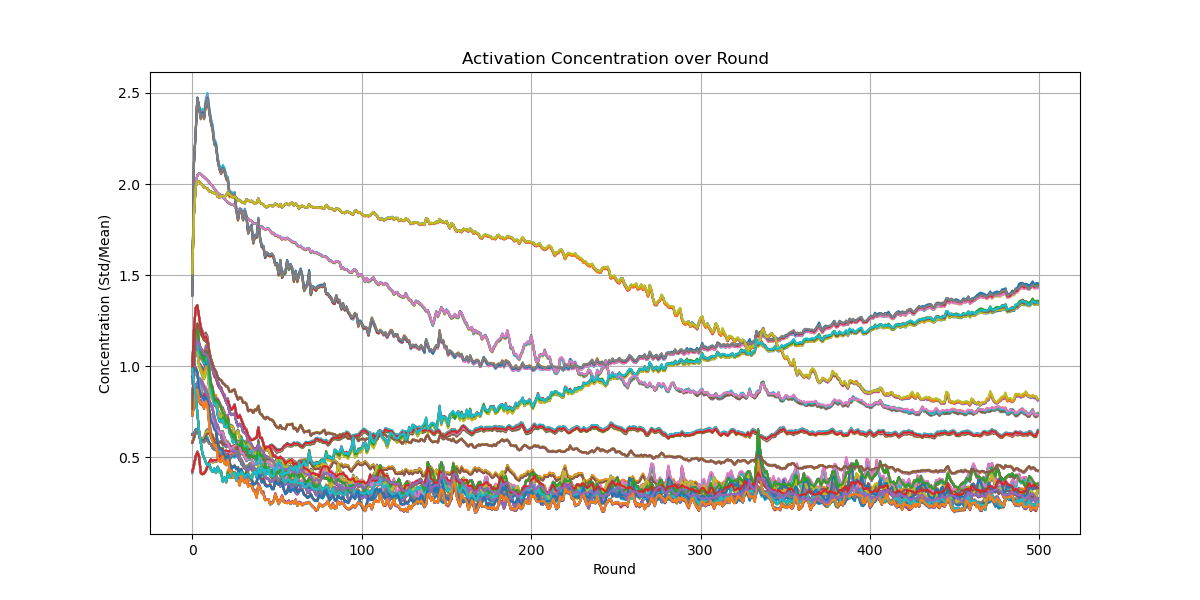} 
\caption{Detailed neuron concentration changes across layers for FedWCM.}
\label{fig:detailed_concentration_fedwcm}
\end{figure}

It can be observed that the neuron concentration in all layers for FedAvg shows a downward trend. In contrast, FedCM exhibits periodic large fluctuations in neuron concentration across all layers, which might be the underlying reason for its difficulty in converging to a stable point. For FedWCM, the neuron concentration mostly decreases across layers, with some layers experiencing an increase, but overall, it remains very stable.

We then focus on analyzing the neuron concentration in FedCM before and after the critical points under long-tailed scenarios. Here, we introduce a combined image showing the accuracy across five long-tailed scenarios and their average concentration change.

\begin{figure}[ht]
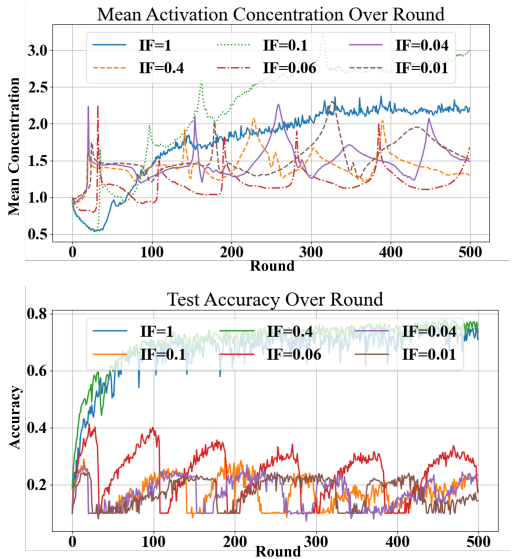

\centering
\includegraphics[width=0.8\columnwidth]{graph/compare_FedCM_IF.png} 
\vspace{0.5cm} 
\includegraphics[width=0.8\columnwidth]{graph/accuracy_plot_IF.png} 
\caption{Top: Average neuron concentration change in FedCM. Bottom: Accuracy across five long-tailed scenarios.}
\label{fig:fedcm_critical_analysis_1}
\end{figure}


By comparing the Average neuron concentration and accuracy change graphs of FedCM under various IF conditions, we observe that as FedCM experiences a precipitous drop in accuracy, its Average neuron concentration also undergoes a sudden change. We believe there is a strong correlation between these two phenomena, possibly due to the occurrence of the minority collapse as discussed in~\cite{fang2021exploring}.

\section{Homomorphic Encryption for Data Distribution in FedWCM}

To protect the privacy of clients’ local class distribution information in FedWCM, we adopt homomorphic encryption (HE), following the protocol used in BatchCrypt~\cite{254465}. HE enables computations directly on encrypted data, ensuring that operations on ciphertexts yield results consistent with computations on plaintexts~\cite{gentry2009fully}.

Specifically, an encryption scheme $E(\cdot)$ is said to be \textit{additively} homomorphic if it satisfies:
\[
E(m_1) \oplus E(m_2) = E(m_1 + m_2),
\]
and \textit{multiplicatively} homomorphic if:
\[
E(m_1) \odot E(m_2) = E(m_1 \cdot m_2),
\]
where $\oplus$ and $\odot$ denote ciphertext-level operations.

In our implementation:

\begin{itemize}
  \item A randomly selected subset of clients generates public/private key pairs and distributes the public keys to other clients.
  \item Each participating client encrypts their local class distribution vector using the public key and uploads the ciphertext to the server.
  \item The server aggregates the ciphertexts and sends the result back to the corresponding key holder for decryption, yielding the global class distribution.
\end{itemize}

This process assumes a \textit{semi-honest} server and does not rely on any trusted third party, aligning with the design goals of BatchCrypt.

Since local class distributions are represented as integer vectors, we use the BFV scheme (Brakerski/Fan-Vercauteren)~\cite{fan2012somewhat}, which supports exact arithmetic over integers. Our implementation is based on the TenSEAL library.

To assess communication overhead, we measured the size of both plaintext and ciphertext representations under varying numbers of classes. As shown in Table~\ref{tab:plaintext_ciphertext}, plaintext size increases linearly with the number of classes, while ciphertext size remains relatively stable at approximately 86KB due to fixed encryption parameters.

\begin{table}[ht]
\centering
\begin{tabular}{|c|c|c|}
\hline
Number of Classes & Plaintext (Byte) & Ciphertext (Byte) \\
\hline
10 & 136 & 88556 \\
20 & 216 & 88554 \\
50 & 456 & 88631 \\
100 & 856 & 88548 \\
\hline
\end{tabular}
\caption{Plaintext and ciphertext sizes for different numbers of classes.}
\label{tab:plaintext_ciphertext}
\end{table}

Notably, since each client only encrypts their own class distribution, the communication cost is independent of the number of clients. For instance, in a scenario with 100 clients and 10-class distributions, the homomorphic encryption process takes approximately 0.0017 seconds per client, with a total communication size of just 13.05MB—negligible compared to model transmission overhead in a typical federated learning round.

In conclusion, our integration of HE into FedWCM enables secure estimation of global class distributions during early training, helping to mitigate class imbalance in long-tailed scenarios while incurring minimal computation and communication overhead.

%
%
\section{Supplementary Experiments}
\subsection{Supplementary Experiments for Heterogeneous methods}

\begin{figure}[ht]
\centering
\includegraphics[width=0.7\columnwidth]{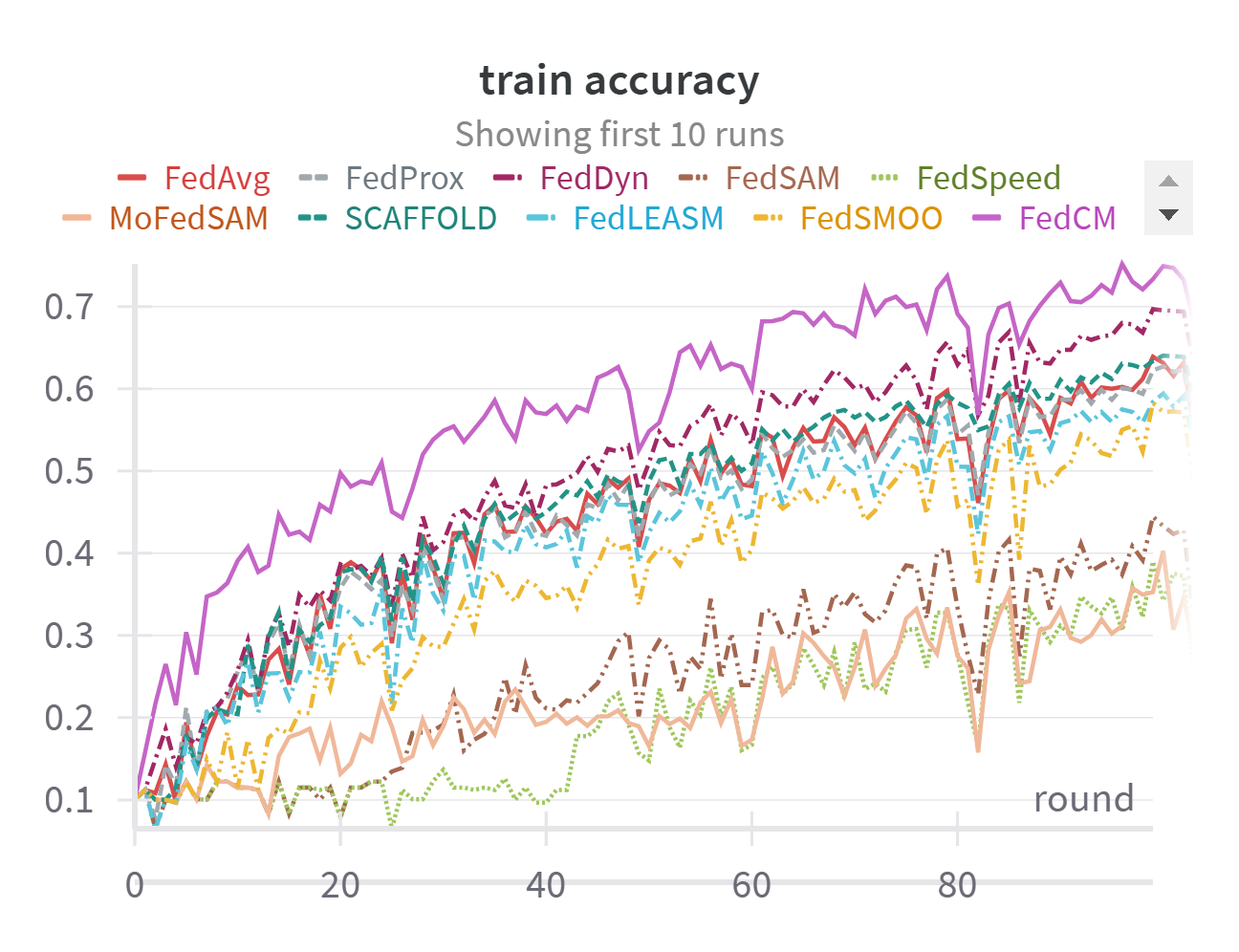} 
\caption{Comparison of Heterogeneous methods for train accuracy.}
\label{fig:comparison_epoch}

\vspace{0.5cm} 

\includegraphics[width=0.7\columnwidth]{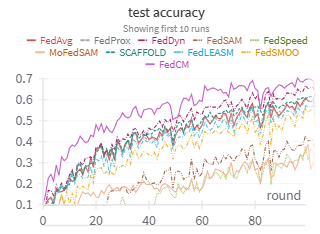} 
\caption{Comparison of Heterogeneous methods for test accuracy.}
\label{fig:comparison_epoch1}
\end{figure}

Figure \ref{fig:comparison_epoch} and Figure \ref{fig:comparison_epoch1} compare the performance of FedCM against nine other federated learning methods under heterogeneous data environments. These methods include FedAvg \cite{mcmahan2017communication}, SCAFFOLD \cite{karimireddy2020scaffold}, FedDyn \cite{acar2021feddyn}, FedProx \cite{li2020federated}, FedSAM \cite{zheng2020fedsam}, MoFedSAM \cite{zheng2020fedsam}, FedSpeed \cite{haddadpour2019fedspeed}, FedSMOO \cite{gupta2021fedsmoo}, and FedLESAM \cite{zheng2020fedsam}. The experiments were conducted on the CIFAR-10 dataset with a heterogeneity level of \( \beta = 0.1 \) (non-long-tailed distribution). As shown in the figures, FedCM not only demonstrates significantly faster convergence but also achieves the highest test accuracy (0.71) at 100 communication rounds, outperforming all other methods. This highlights the strong performance of FedCM in heterogeneous data settings.

Using FedAvg~\cite{mcmahan2017communication} as the baseline, several classical methods, such as FedDyn~\cite{acar2021feddyn}, SCAFFOLD~\cite{karimireddy2020scaffold}, and FedProx~\cite{li2020federated}, achieve higher test accuracy. Specifically, FedDyn, with its dynamic regularization strategy, and SCAFFOLD, which uses control variates to correct local updates, effectively mitigate the local drift caused by data heterogeneity. These methods exhibit relatively smooth convergence curves and achieve slightly higher accuracy than FedAvg. However, their final accuracy still falls short of FedCM.

The three SAM-based methods, including FedSAM~\cite{zheng2020fedsam}, FedSMOO~\cite{sun2023dynamic}, and FedLESAM~\cite{fan2024locally}, focus on improving generalization by flattening the loss landscape. However, as observed in the results, these methods exhibit slower accuracy improvements, particularly during the early stages of training. Similarly, FedSpeed~\cite{haddadpour2019fedspeed} also shows slow progress in the initial stages and overall lags behind other methods, failing to demonstrate its full potential.

In summary, FedCM demonstrates superior performance in heterogeneous data environments, as evidenced by the following two key aspects:
\begin{itemize}
    \item \textbf{Faster Convergence}: FedCM quickly achieves high test accuracy in the early rounds, outperforming other methods in terms of convergence speed.
    \item \textbf{Highest Accuracy}: FedCM achieves a final accuracy of 0.68, surpassing other methods and demonstrating strong stability and generalization capabilities.
\end{itemize}

These findings highlight the effectiveness of FedCM in addressing the challenges posed by data heterogeneity, particularly in non-long-tailed scenarios on the CIFAR-10 dataset. The synergy between momentum mechanisms and consensus updates plays a crucial role in improving convergence speed and achieving superior accuracy.

\subsection{Supplementary Experiments for FedGrab on Cifar10 Dataset}
In this part, We present the experimental results for FedGrab\cite{guo2022fedgrab} on the Cifar10 dataset, which were not included in the main paper due to space limitations. The experiments are conducted using the official reproduction code provided by the authors. These results further demonstrate the effectiveness of FedGrab on image classification tasks.
The supplementary experiments are based on the Cifar10 dataset, a widely-used benchmark for image classification in federated learning. We follow the same experimental settings as described in the main paper, including the number of clients, local epochs, and communication rounds. For completeness, we summarize the key hyperparameters used in Table~\ref{tab:hyperparams}.

\begin{table*}[t]\vspace{-3 mm}
    \centering
    \tiny
    \renewcommand{\arraystretch}{0.85} 
    \setlength{\tabcolsep}{2pt} 
    \caption{Performance comparison on CIFAR-10 under $\beta=0.6$ and $\beta=0.1$ with varying imbalance factors (IF). The reported results represent the mean test accuracy across 3 trials using different random seeds.}
    \begin{tabular}{c c *{16}{>{\centering\arraybackslash}m{0.048\linewidth}}} 
        \toprule
        \multicolumn{2}{c}{} & \multicolumn{2}{c}{FedAvg} & \multicolumn{2}{c}{BalanceFL} & \multicolumn{2}{c}{FedGrab} & \multicolumn{2}{c}{FedCM} & \multicolumn{2}{c}{+ Focal Loss} & \multicolumn{2}{c}{+ Balance Loss} & \multicolumn{2}{c}{+ Balance Sampler} & \multicolumn{2}{c}{FedWCM} \\
        \cmidrule(lr){3-4} \cmidrule(lr){5-6} \cmidrule(lr){7-8} \cmidrule(lr){9-10} \cmidrule(lr){11-12} \cmidrule(lr){13-14} \cmidrule(lr){15-16} \cmidrule(lr){17-18}
        Dataset & IF & 0.6 & 0.1 & 0.6 & 0.1 & 0.6 & 0.1 & 0.6 & 0.1 & 0.6 & 0.1 & 0.6 & 0.1 & 0.6 & 0.1 & 0.6 & 0.1 \\
        \midrule
        \multirow{5}{*}{CIFAR-10} 
        & 1    & 0.7906 & 0.6881  & 0.7629 & 0.6813  & 0.7950 & 0.6813  & 0.8126 & 0.7092  & 0.8040 & 0.6937  & 0.7931 & 0.7169  & 0.8065 & 0.7198  & \textbf{0.8242} & \textbf{0.7337} \\
        & 0.5  & 0.7535 & 0.7183  & 0.7539 & 0.7429  & 0.7810 & 0.6560  & 0.6793 & 0.6686  & 0.6565 & 0.6319  & 0.6877 & 0.6924  & 0.6968 & 0.6590  & \textbf{0.7926} & \textbf{0.7968} \\
        & 0.1  & 0.6232 & 0.6775  & 0.6380 & 0.6541  & 0.6880 & 0.3260  & 0.2175 & 0.2393  & 0.1311 & 0.3095  & 0.1864 & 0.3016  & 0.2871 & 0.3994  & \textbf{0.6905} & \textbf{0.7207} \\
        & 0.05 & 0.5715 & 0.5642  & 0.5652 & 0.5535  & 0.5000 & 0.1870  & 0.2274 & 0.2358  & 0.2005 & 0.1413  & 0.2680 & 0.2525  & 0.1427 & 0.1315  & \textbf{0.6006} & \textbf{0.6132} \\
        & 0.01 & 0.4567 & 0.4600  & 0.4731 & 0.4616  & 0.3140 & 0.1350  & 0.1865 & 0.2312  & 0.1687 & 0.2023  & 0.2087 & 0.2405  & 0.1249 & 0.1584  & \textbf{0.4983} & \textbf{0.5012} \\
        \bottomrule
    \end{tabular}
    \label{tab:hyperparams}
\end{table*}

As shown in Table~\ref{tab:hyperparams}, we present the performance comparison across different methods (FedAvg, BalanceFL, FedGrab, FedCM and its variants, FedWCM) on the CIFAR-10 dataset under different imbalance factors ($IF$) and $\beta$ values of 0.6 and 0.1. Specifically, we focus on the results obtained by FedGrab and compare them with other methods.

From the results, we observe that while FedGrab achieves relatively high accuracy in some cases (e.g., when $IF=1$, $IF=0.1$ and $IF=0.5$), its overall performance is still inferior to FedWCM. FedWCM consistently provides better test accuracy across all imbalance factors, especially in highly imbalanced scenarios ($IF=0.05$ and $IF=0.01$), where it significantly outperforms FedGrab.

While FedGrab performs relatively well under $\beta=0.6$ in some cases, its performance under $\beta=0.1$ suffers significantly. Specifically, in highly heterogeneous data scenarios (e.g., $IF=0.1$ and $IF=0.05$), FedGrab's performance degrades sharply, as shown by the results for $IF=0.1$, where FedGrab achieves only 32.60\% accuracy, compared to 67.75\% for FedAvg and 72.07\% for FedWCM. This indicates that FedGrab is less effective in handling scenarios with increased data heterogeneity.

In contrast, FedWCM demonstrates robustness across all IF values, particularly under $\beta=0.1$, where its results remain consistently superior. This highlights the advantage of FedWCM's weighted aggregation mechanism in mitigating the adverse effects of data heterogeneity.

\section{Proof of Convergence for FedWCM}
\subsection{Notations and Definitions}


Let \( F_0 = \emptyset \) and define \( F_{r,k}^i := \sigma(\{x_{r,j}^i\}_{0 \leq j \leq k} \cup F_r) \) and \( F_{r+1} := \sigma\left( \bigcup_i F_{r,K}^i \right) \) for all \( r \geq 0 \), where \( \sigma(\cdot) \) denotes the \(\sigma\)-algebra. Let \( \mathbb{E}_r[\cdot] := \mathbb{E}[\cdot | F_r] \) represent the expectation conditioned on the filtration \( F_r \), with respect to the random variables \( S_r, \{\xi_{r,k}^i\}_{1 \leq i \leq N, 0 \leq k < K} \) in the \( r \)-th iteration. We also use \( \mathbb{E}[\cdot] \) to denote the global expectation over all randomness in the algorithms.


For all \( r \geq 0 \), we define the following auxiliary variable to facilitate the proofs:
\[
\epsilon_r := \mathbb{E}\left[\|\nabla f(x_r) - g_{r+1}\|^2\right],
\]
where \( g_{r+1} \) represents the aggregated gradient at iteration \( r+1 \).


We further define:
\[
U_r := \frac{1}{NK} \sum_{i=1}^{N} \sum_{k=1}^{K} \mathbb{E}\left[\|x_{r,k}^i - x_r\|^2\right],
\]
where \( x_{r,k}^i \) denotes the \( k \)-th local update of client \( i \) during the \( r \)-th round and \( x_r \) is the global model at round \( r \).


We also introduce:
\[
\zeta_{r,k}^i := \mathbb{E}\left[ x_{r,k+1}^i - x_{r,k}^i \mid F_{r,k}^i \right],
\]
which represents the expected update between successive local updates on client \( i \).


To measure the aggregated local update gradients, we define:
\[
\Xi_r := \frac{1}{N} \sum_{i=1}^{N} \mathbb{E}\left[\|\zeta_{r,0}^i\|^2\right].
\]


Finally, throughout the appendix, let:
\[
\Delta := f(x_0) - f^*, \quad G_0 := \frac{1}{N} \sum_{i=1}^{N} \|\nabla f_i(x_0)\|^2,
\]
where \( f^* \) is the optimal function value, and \( G_0 \) represents the initial gradient norm. Additionally, we set \( x_{-1} := x_0 \) for notational convenience.

\subsection{Preliminary Lemmas}


\begin{lemma}
\label{lemma:sum_of_squares}
Let \( \{X_1, \cdots, X_\tau\} \subset \mathbb{R}^d \) be random variables. If their marginal means and variances satisfy \( \mathbb{E}[X_i] = \mu_i \) and \( \mathbb{E}[\|X_i - \mu_i\|^2] \leq \sigma^2 \), then the following inequality holds:
\[
\mathbb{E}\left[\left\|\sum_{i=1}^\tau X_i\right\|^2\right] \leq \left\|\sum_{i=1}^\tau \mu_i\right\|^2 + \tau^2 \sigma^2.
\]

Additionally, if the random variables are correlated in a Markov way such that \( \mathbb{E}[X_i \mid X_{i-1}, \cdots, X_1] = \mu_i \) and \( \mathbb{E}[\|X_i - \mu_i\|^2] \leq \sigma^2 \), i.e., the variables \( \{X_i - \mu_i\} \) form a martingale, then the following tighter bound applies:
\[
\mathbb{E}\left[\left\|\sum_{i=1}^\tau X_i\right\|^2\right] \leq 2 \mathbb{E}\left[\left\|\sum_{i=1}^\tau \mu_i\right\|^2\right] + 2 \tau \sigma^2.
\]

These results are adapted the work of ~\cite{karimireddy2020scaffold}.
\end{lemma}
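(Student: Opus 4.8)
The plan is to treat both inequalities through the common centering decomposition $X_i = \mu_i + Y_i$ with $Y_i := X_i - \mu_i$, and to distinguish the two cases purely by the strength of the orthogonality available among the noise terms $Y_i$. In the first (marginal-mean) case each $\mu_i = \mathbb{E}[X_i]$ is deterministic, so $\sum_i \mu_i$ is a fixed vector and $\mathbb{E}[Y_i]=0$ unconditionally; in the second (martingale) case $\mu_i$ is measurable with respect to the past and hence random, which is precisely why the stated right-hand side carries an outer expectation $\mathbb{E}[\|\sum_i \mu_i\|^2]$. Keeping track of this distinction is what drives the two different final constants.

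For the first bound I would expand
\[
\Bigl\|\sum_{i=1}^\tau X_i\Bigr\|^2 = \Bigl\|\sum_{i=1}^\tau \mu_i\Bigr\|^2 + 2\Bigl\langle \sum_{i=1}^\tau \mu_i,\ \sum_{i=1}^\tau Y_i\Bigr\rangle + \Bigl\|\sum_{i=1}^\tau Y_i\Bigr\|^2
\]
and take expectations. Since $\mathbb{E}[Y_i]=0$ for every $i$ and $\sum_i\mu_i$ is deterministic, the cross term vanishes. It then remains to bound the noise term, for which I would invoke the elementary convexity inequality $\|\sum_{i=1}^\tau a_i\|^2 \le \tau\sum_{i=1}^\tau\|a_i\|^2$ (Jensen applied to $\|\cdot\|^2$) to obtain $\mathbb{E}[\|\sum_i Y_i\|^2] \le \tau\sum_i\mathbb{E}[\|Y_i\|^2] \le \tau^2\sigma^2$, which yields the claimed $\|\sum_i\mu_i\|^2 + \tau^2\sigma^2$.

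For the second bound the deterministic cross-term argument no longer applies, since $\mu_i$ is random, so I would instead split with the crude inequality $\|a+b\|^2 \le 2\|a\|^2 + 2\|b\|^2$ applied to $a=\sum_i\mu_i$ and $b=\sum_i Y_i$, giving $\mathbb{E}[\|\sum_i X_i\|^2] \le 2\mathbb{E}[\|\sum_i\mu_i\|^2] + 2\mathbb{E}[\|\sum_i Y_i\|^2]$. The remaining task — and the only genuinely non-routine step — is to show that the martingale increments are orthogonal in $L^2$, i.e. $\mathbb{E}[\langle Y_i, Y_j\rangle]=0$ for $i<j$. Here I would condition on $\mathcal{G}_{j-1} := \sigma(X_1,\dots,X_{j-1})$: because $Y_i$ is $\mathcal{G}_{j-1}$-measurable for $i<j$ while $\mathbb{E}[Y_j\mid\mathcal{G}_{j-1}]=0$ by the Markov hypothesis, the tower property gives $\mathbb{E}[\langle Y_i,Y_j\rangle] = \mathbb{E}[\langle Y_i,\ \mathbb{E}[Y_j\mid\mathcal{G}_{j-1}]\rangle]=0$. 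Consequently the mixed terms drop and $\mathbb{E}[\|\sum_i Y_i\|^2] = \sum_i\mathbb{E}[\|Y_i\|^2] \le \tau\sigma^2$; substituting back produces the tighter bound $2\mathbb{E}[\|\sum_i\mu_i\|^2] + 2\tau\sigma^2$.

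I expect the bookkeeping over which $\mu_i$ are deterministic versus past-measurable, together with the careful conditioning in the orthogonality step, to be the only points requiring attention; everything else reduces to the two standard estimates ($\|\sum a_i\|^2\le\tau\sum\|a_i\|^2$ and $\|a+b\|^2\le 2\|a\|^2+2\|b\|^2$) and the vanishing of the expected cross terms.
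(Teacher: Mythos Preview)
Your argument is correct and is the standard proof of this lemma. The paper does not actually supply a proof of this statement: it simply cites the result from SCAFFOLD~\cite{karimireddy2020scaffold} (``These results follow from Lemma~1 in \textit{Scaffold}''), so there is no paper-side derivation to compare against beyond noting that your centering decomposition, the Jensen bound in the first part, and the $L^2$-orthogonality of martingale differences in the second part are exactly the ingredients underlying the original SCAFFOLD lemma.
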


\begin{lemma}
Suppose \( \{X_1, \cdots, X_\tau\} \subset \mathbb{R}^d \) be random variables that are potentially dependent. If their marginal means and variances satisfy \( \mathbb{E}[X_i] = \mu_i \) and \( \mathbb{E}[\|X_i - \mu_i\|^2] \leq \sigma^2 \), then it holds that
\[
\mathbb{E}\left[\left\|\sum_{i=1}^\tau X_i\right\|^2\right] \leq \left\|\sum_{i=1}^\tau \mu_i\right\|^2 + \tau^2 \sigma^2.
\]

If they are correlated in the Markov way such that \( \mathbb{E}[X_i \mid X_{i-1}, \cdots, X_1] = \mu_i \) and \( \mathbb{E}[\|X_i - \mu_i\|^2] \leq \sigma^2 \), i.e., the variables \( \{X_i - \mu_i\} \) form a martingale, then the following tighter bound holds:
\[
\mathbb{E}\left[\left\|\sum_{i=1}^\tau X_i\right\|^2\right] \leq 2 \mathbb{E}\left[\left\|\sum_{i=1}^\tau \mu_i\right\|^2\right] + 2 \tau \sigma^2.
\]

These results follow from Lemma 1 in \textit{Scaffold} \cite{karimireddy2020scaffold}.
\end{lemma}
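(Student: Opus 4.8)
The plan is to prove both bounds from a single decomposition and then treat the deterministic-mean and martingale-difference cases separately, since they differ only in how the interaction between the centered sum and the mean sum is handled. First I would write $Y_i := X_i - \mu_i$, so that $\sum_{i=1}^\tau X_i = \sum_{i=1}^\tau \mu_i + \sum_{i=1}^\tau Y_i$, where each $Y_i$ satisfies $\mathbb{E}[\|Y_i\|^2] \le \sigma^2$ by hypothesis. The entire argument then reduces to controlling $\mathbb{E}[\|\sum_i Y_i\|^2]$ together with the cross-term.

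For the first (general, possibly dependent) bound, the means $\mu_i$ are deterministic constants, so $\mathbb{E}[Y_i] = \mathbb{E}[X_i] - \mu_i = 0$. Expanding the square gives $\mathbb{E}[\|\sum_i X_i\|^2] = \|\sum_i \mu_i\|^2 + 2\langle \sum_i \mu_i, \mathbb{E}[\sum_i Y_i]\rangle + \mathbb{E}[\|\sum_i Y_i\|^2]$, and the middle term vanishes because $\mathbb{E}[\sum_i Y_i] = 0$. To bound the last term I would invoke Cauchy--Schwarz (equivalently convexity of $\|\cdot\|^2$) in the form $\|\sum_{i=1}^\tau Y_i\|^2 \le \tau \sum_{i=1}^\tau \|Y_i\|^2$, take expectations, and apply the variance bound to get $\mathbb{E}[\|\sum_i Y_i\|^2] \le \tau \sum_i \mathbb{E}[\|Y_i\|^2] \le \tau^2 \sigma^2$. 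Combining yields $\mathbb{E}[\|\sum_i X_i\|^2] \le \|\sum_i \mu_i\|^2 + \tau^2 \sigma^2$.

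For the second (martingale) bound, the conditional means $\mu_i = \mathbb{E}[X_i \mid X_{i-1}, \dots, X_1]$ are themselves random, so the clean cancellation above is unavailable; here I would instead apply Young's inequality $\|a+b\|^2 \le 2\|a\|^2 + 2\|b\|^2$ with $a = \sum_i \mu_i$ and $b = \sum_i Y_i$ to obtain $\mathbb{E}[\|\sum_i X_i\|^2] \le 2\mathbb{E}[\|\sum_i \mu_i\|^2] + 2\mathbb{E}[\|\sum_i Y_i\|^2]$. The improvement over the first case is that the centered terms are now orthogonal: for $i > j$ the variable $Y_j$ is measurable with respect to $\sigma(X_1,\dots,X_{i-1})$, so by the tower property $\mathbb{E}[\langle Y_i, Y_j\rangle] = \mathbb{E}[\langle \mathbb{E}[Y_i \mid X_1,\dots,X_{i-1}], Y_j\rangle] = 0$ since $\mathbb{E}[Y_i \mid X_1,\dots,X_{i-1}] = 0$. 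Hence all cross-terms drop and $\mathbb{E}[\|\sum_i Y_i\|^2] = \sum_i \mathbb{E}[\|Y_i\|^2] \le \tau \sigma^2$, giving the tighter bound $\mathbb{E}[\|\sum_i X_i\|^2] \le 2\mathbb{E}[\|\sum_i \mu_i\|^2] + 2\tau\sigma^2$.

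The step I expect to require the most care is not any single calculation but correctly diagnosing why the two regimes yield structurally different bounds. In the martingale case the randomness of $\mu_i$ means the interaction term $\mathbb{E}[\langle \sum_i \mu_i, \sum_j Y_j\rangle]$ does not vanish: for $j < i$ the factor $\mu_i$ depends on $X_j$ and hence correlates with $Y_j$, which is precisely what forces the use of Young's inequality and leaves $\mathbb{E}[\|\sum_i \mu_i\|^2]$ inside the expectation rather than as a deterministic $\|\sum_i \mu_i\|^2$. Recognizing this as the reason the second bound carries an extra factor of two (and an expectation on the mean term) is the conceptual crux; the martingale orthogonality itself is a routine application of the tower property, and both bounds ultimately reduce to the elementary inequalities above.
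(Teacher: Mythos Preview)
Your proposal is correct and follows the standard argument for this lemma; the paper itself does not supply a proof but simply cites Lemma~1 of \textit{Scaffold}~\cite{karimireddy2020scaffold}, so your write-up is in fact more detailed than what appears here. The decomposition $X_i = \mu_i + Y_i$, the vanishing cross-term in the deterministic-mean case, the Young-inequality split in the martingale case, and the orthogonality of martingale differences via the tower property are exactly the ingredients of the original proof.
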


\begin{lemma}
\label{weight}
Let \( x_r \) denote the global model at round \( r \), and let \( x_{r,k}^i \) be the local models for client \( i \) after \( k \) local updates. Assume that the weights \( w_i^r \) are computed using the Softmax function based on the deviation of the local data distribution from the global distribution, i.e.,
\[
w_i^r = \frac{\exp(s_i^r / T)}{\sum_{j=1}^N \exp(s_j^r / T)},
\]
where \( s_i^r \) measures the deviation of client \( i \)'s local distribution from the global distribution. Then, the weighted average of the local gradients,
\[
\frac{1}{K} \sum_{i=1}^N \sum_{k=1}^K w_i^r \nabla f(x_{r,k}^i),
\]
is closer to the global gradient \( \nabla f(x_r) \) than the unweighted average,
\[
\frac{1}{NK} \sum_{i=1}^N \sum_{k=1}^K \nabla f(x_{r,k}^i),
\]
in terms of the \( \ell_2 \)-norm.
\end{lemma}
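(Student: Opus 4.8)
The plan is to reduce the claim to a comparison between two points in $\mathbb{R}^d$ and then exploit the monotone structure of the Softmax weights. First I would invoke the learning-rate constraints of Theorem~\ref{thm:convergence}, which keep every local iterate $x_{r,k}^i$ within a controlled neighborhood of the round model $x_r$ (quantitatively the bound $U_r$ from the notation); this lets me replace each local-update gradient by the client gradient at $x_r$ up to an error of order $\sqrt{U_r}$ that is common to both averages, so the inequality becomes one about $\sum_i w_i^r \nabla f_i(x_r)$ versus $\frac{1}{N}\sum_i \nabla f_i(x_r)$. I would then adopt a class-additive model of the local objectives, $f_i(x)=\sum_{c=1}^{C} q_{i,c}\,\phi_c(x)$ with $q_{i,c}$ the local class proportions, so each client gradient is the convex combination $\sum_c q_{i,c} h_c$ of per-class directions $h_c:=\nabla\phi_c(x_r)$ and the target gradient $\nabla f(x_r)$ is the uniform mixture $\sum_c \hat p_c h_c$. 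Under this reduction both averages are mixtures $\sum_c q_c h_c$, and the statement becomes: the effective class weights $\tilde q_c:=\sum_i w_i^r q_{i,c}$ of the reweighted average are closer, in the seminorm induced by $\{h_c\}$, to the target vector $\hat p=(\hat p_c)_c$ than are the effective weights $\bar q_c:=\frac{1}{N}\sum_i q_{i,c}$ of the plain average.

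The second step analyzes how the Softmax reweighting displaces the effective class weights. Expanding the Softmax around the uniform weights, $w_i^r=\frac{1}{N}+\frac{1}{NT}(s_i^r-\bar s^r)+O(T^{-2})$ with $\bar s^r:=\frac{1}{N}\sum_i s_i^r$, yields $\tilde q_c-\bar q_c=\frac{1}{T}\,\mathrm{Cov}_i\!\left(s_i^r,q_{i,c}\right)+O(T^{-2})$, where $\mathrm{Cov}_i$ is taken over the uniform law on clients; since $\sum_c q_{i,c}\equiv 1$, these first-order corrections automatically sum to zero. I would then argue that $\mathrm{Cov}_i(s_i^r,q_{i,c})$ is positive for the globally scarce classes and negative for the dominant ones: by the very definition of $s_i^r$ as a distributional-deviation score, a client holding a larger share of under-represented data receives a larger score, so reweighting shifts effective mass from over- to under-represented classes, i.e.\ $\mathrm{sign}(\tilde q_c-\bar q_c)=\mathrm{sign}(\hat p_c-\bar q_c)$ coordinatewise. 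Together with $\bar q_c\approx p_c$ (the average local proportion tracks the global one), this says precisely that the first-order correction points toward the target.

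Finally I would expand the squared error $\|\tilde g-\nabla f(x_r)\|^2=\|\bar g-\nabla f(x_r)\|^2+2\langle \bar g-\nabla f(x_r),\,\tilde g-\bar g\rangle+\|\tilde g-\bar g\|^2$, where $\bar g$ and $\tilde g$ denote the unweighted and weighted averages. By the previous step the cross term equals $-\frac{1}{T}\sum_{c,c'}(\hat p_{c'}-\bar q_{c'})\,\mathrm{Cov}_i(s_i^r,q_{i,c})\,\langle h_{c'},h_c\rangle+O(T^{-2})$, which is strictly negative whenever the partition is genuinely heterogeneous and the per-class directions $\{h_c\}$ are non-degenerate, while the last term is only $O(T^{-2})$; hence for the temperature supplied by our schedule (which, recall, tightens as the imbalance grows) the negative first-order term dominates and $\|\tilde g-\nabla f(x_r)\|<\|\bar g-\nabla f(x_r)\|$. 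I expect the main obstacle to be exactly this sign/alignment claim: without a structural hypothesis it is not automatic that reweighting toward high-deviation clients reduces the \emph{gradient-norm} error rather than merely the \emph{class-weight} error, since the $h_c$ may be strongly correlated. I would handle this either by staying in the small-perturbation (large-$T$) regime, where only the cross term is active and a mild ``$\{h_c\}$ not nearly collinear'' condition suffices, or, for general $T$, by adopting the stronger assumption that the error-minimizing aggregation weights lie on the segment between the uniform weights and the Softmax weights, which renders the distance monotone in $T$ along that path and delivers the inequality at the endpoint.
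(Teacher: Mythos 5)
Your route is genuinely different from the paper's. The paper's proof defines the per-client deviations $\Delta_i = \nabla f(x_r) - \frac{1}{K}\sum_{k}\nabla f(x_{r,k}^i)$, \emph{posits} that the softmax weights and these deviations are inversely ordered ($w_1\le\cdots\le w_N$ while $\Delta_1\ge\cdots\ge\Delta_N$), and then applies the rearrangement/Chebyshev sum inequality to conclude $\sum_i w_i\Delta_i \le \frac{1}{N}\sum_i\Delta_i$. That argument is shorter and formally valid for any temperature, but it treats the vector-valued $\Delta_i$ as if they were totally ordered scalars, and a one-sided inequality between the two weighted sums does not by itself yield the $\ell_2$-norm comparison the lemma actually asserts. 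Your approach instead expands the softmax to first order in $1/T$, decomposes client gradients through a class-additive model, and shows that the correction $\tilde q_c-\bar q_c=\frac{1}{T}\mathrm{Cov}_i(s_i^r,q_{i,c})+O(T^{-2})$ transfers effective mass toward under-represented classes, then controls the squared error via the cross term. What this buys is an explicit mechanism (the covariance of score with class proportion) and an expansion that targets the correct quantity, $\|\tilde g-\nabla f(x_r)\|^2$; what it costs is restriction to the large-$T$ perturbative regime and an extra non-collinearity condition on the per-class directions $\{h_c\}$.

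That said, both arguments founder on the same step, and you should be aware that you have not closed it either. The lemma needs a bridge from the \emph{distribution-space} score $s_i^r$ to the \emph{gradient-space} deviation, i.e.\ that up-weighting high-score clients moves the aggregate gradient toward $\nabla f(x_r)$ rather than merely rebalancing class proportions. The paper smuggles this in as the unexplained ``inverse relationship'' between $w_i$ and $\Delta_i$; you localize it honestly as the sign/alignment claim $\mathrm{sign}(\tilde q_c-\bar q_c)=\mathrm{sign}(\hat p_c-\bar q_c)$ together with the requirement that the induced cross term $-\frac{1}{T}\sum_{c,c'}(\hat p_{c'}-\bar q_{c'})\,\mathrm{Cov}_i(s_i^r,q_{i,c})\langle h_{c'},h_c\rangle$ be negative. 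Note also that $s_i^r$ as defined in Equation (3) is a magnitude of deviation, not a signed one, so a client can score highly while holding an \emph{excess} of a majority class; the claimed positivity of $\mathrm{Cov}_i(s_i^r,q_{i,c})$ for scarce classes therefore requires an additional structural hypothesis on the partition (which your small-perturbation fallback does not remove). As written, your proposal is a plausible and better-localized heuristic than the paper's, but it is not a complete proof, and neither is the paper's.
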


\begin{proof}
Define the deviations for each client as:
\[
\Delta_i = \nabla f(x_r) - \frac{1}{K} \sum_{k=1}^K \nabla f(x_{r,k}^i),
\]
and let the set of deviations be \( \Delta = (\Delta_1, \Delta_2, \dots, \Delta_N) \). Denote the weights as \( w = (w_1, w_2, \dots, w_N) \), satisfying \( \sum_{i=1}^N w_i = 1 \) and \( w_i \geq 0 \). The unweighted average corresponds to uniform weights \( w_i = \frac{1}{N} \), and the weighted average uses the Softmax weights.

To prove the inequality, we will explicitly use the inverse relationship between \( w_i \) and \( \Delta_i \), along with the properties of the rearrangement inequality.

First, sort \( w_i \) and \( \Delta_i \) such that:
\[
w_1 \leq w_2 \leq \dots \leq w_N \quad \text{and} \quad \Delta_1 \geq \Delta_2 \geq \dots \geq \Delta_N.
\]
Under this ordering, the pairwise product \( w_i \Delta_i \) is minimized compared to any other pairing of \( w_i \) and \( \Delta_i \) due to the rearrangement inequality. Specifically, for any permutation \( \sigma \) of \( \{1, 2, \dots, N\} \), the following holds:
\[
\sum_{i=1}^N w_i \Delta_i \leq \sum_{i=1}^N w_i \Delta_{\sigma(i)}.
\]
Equality is achieved only when \( \Delta_i \) and \( w_i \) are paired in reverse order (i.e., the largest \( \Delta_i \) is matched with the smallest \( w_i \), and so on).

Next, consider the unweighted arithmetic mean \( \frac{1}{N} \sum_{i=1}^N \Delta_i \), which corresponds to the case where all weights are uniform (\( w_i = \frac{1}{N} \) for all \( i \)). For uniform weights, we have:
\[
\frac{1}{N} \sum_{i=1}^N \Delta_i = \sum_{i=1}^N \frac{1}{N} \Delta_i.
\]

Now, compare \( f(\Delta, w) = \sum_{i=1}^N w_i \Delta_i \) to this uniform weighting. By the rearrangement inequality, the weighted sum \( f(\Delta, w) \) is minimized when \( w_i \) and \( \Delta_i \) are inversely related (as given in the problem statement). However, for uniform weights, the weights \( w_i = \frac{1}{N} \) correspond to the mean value of \( \Delta_i \), which is always greater than or equal to the weighted sum \( f(\Delta, w) \) when \( w_i \) and \( \Delta_i \) satisfy the inverse relationship:
\[
\sum_{i=1}^N w_i \Delta_i \leq \sum_{i=1}^N \frac{1}{N} \Delta_i.
\]

Thus, we have:
\[
f(\Delta, w) = \sum_{i=1}^N w_i \Delta_i \leq \frac{1}{N} \sum_{i=1}^N \Delta_i.
\]

Equality holds if and only if all \( \Delta_i \) are equal, in which case the weighting has no effect. This completes the proof.
\qed
\end{proof}
\onecolumn
\subsection{Assumption}
\begin{assumption}
\label{a1}
Each local objective function $f_i$ is $L$-smooth, i.e., for any $x, y \in \mathbb{R}^d$ and $1 \leq i \leq N$, we have
\[
    \|\nabla f_i(x) - \nabla f_i(y)\| \leq L \|x - y\|.
\]
\end{assumption}

\begin{assumption}
\label{a2}
There exists $\sigma \geq 0$ such that for any $x \in \mathbb{R}^d$ and $1 \leq i \leq N$, we have
\[
    \mathbb{E}_{\xi_i} [\nabla F(x; \xi_i)] = \nabla f_i(x),
\]
and
\[
    \mathbb{E}_{\xi_i} [\|\nabla F(x; \xi_i) - \nabla f_i(x)\|^2] \leq \sigma^2,
\]
where $\xi_i \sim \mathcal{D}_i$ are independent and identically distributed.
\end{assumption}

\subsection{Proof of FedWCM}
\begin{lemma}
\label{l2}
If $\gamma L \leq \frac{\alpha_r}{6}$, the following holds for $r \geq 1$:
\[
\epsilon_r \leq \left(1 - \frac{8\alpha_r}{9}\right) \epsilon_{r-1} + \frac{4\gamma^2 L^2}{\alpha_r} \mathbb{E}[\|\nabla f(x_{r-1})\|^2] + \frac{2\alpha_r^2 \sigma^2}{NK} + 4\alpha_r L^2 U_r.
\]
Additionally, it holds for $r=0$ that
\[
\epsilon_0 \leq (1 - \alpha_0)\epsilon_{-1} + \frac{2\alpha_0^2 \sigma^2}{NK} + 4\alpha_0 L^2 U_0.
\]
\end{lemma}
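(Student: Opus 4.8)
\textbf{Proof proposal for Lemma~\ref{l2}.}
The plan is to track the one-step recursion of the momentum error $\epsilon_r = \mathbb{E}\left[\|\nabla f(x_r) - g_{r+1}\|^2\right]$. Recall that the aggregated gradient is updated as $g_{r+1} = (1-\alpha_r) g_r + \alpha_r \cdot (\text{current sampled local gradients})$, so I would first write
\[
\nabla f(x_r) - g_{r+1} = (1-\alpha_r)\bigl(\nabla f(x_{r-1}) - g_r\bigr) + (1-\alpha_r)\bigl(\nabla f(x_r) - \nabla f(x_{r-1})\bigr) + \alpha_r\bigl(\nabla f(x_r) - \hat g_r\bigr),
\]
where $\hat g_r$ denotes the (stochastic, weighted) average of local gradients evaluated at the local iterates. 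The term $\nabla f(x_r) - \hat g_r$ splits into a zero-mean stochastic part (controlled by Assumption~\ref{a2}, contributing the $\frac{2\alpha_r^2\sigma^2}{NK}$ term after using Lemma~\ref{lemma:sum_of_squares} with $NK$ samples) and a bias part $\nabla f(x_r) - \frac{1}{NK}\sum_{i,k}\nabla f_i(x_{r,k}^i)$, which is bounded via $L$-smoothness (Assumption~\ref{a1}) by $L^2 \cdot \frac{1}{NK}\sum_{i,k}\|x_{r,k}^i - x_r\|^2 = L^2 U_r$; this is where Lemma~\ref{weight} is invoked to argue the weighted average does not increase this discrepancy.

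Next I would take conditional expectations and apply Young's inequality $\|a+b+c\|^2 \le (1+\tfrac{\alpha_r}{3})\|a\|^2 + C/\alpha_r \cdot(\|b\|^2+\|c\|^2)$-type splitting, choosing the cross-term constants so the $(1-\alpha_r)^2(1+\tfrac{\alpha_r}{3})$ factor in front of $\epsilon_{r-1}$ collapses to at most $1 - \tfrac{8\alpha_r}{9}$ (here $(1-\alpha_r)^2(1+\tfrac{\alpha_r}{3}) \le 1 - 2\alpha_r + \alpha_r^2 + \tfrac{\alpha_r}{3} \le 1 - \tfrac{8\alpha_r}{9}$ for $\alpha_r \le 1$, using the crude bound on the remaining quadratic terms). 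The term $\|\nabla f(x_r) - \nabla f(x_{r-1})\|^2 \le L^2\|x_r - x_{r-1}\|^2$ needs to be converted into $\mathbb{E}[\|\nabla f(x_{r-1})\|^2]$ plus lower-order contributions; this uses the fact that the global step satisfies $x_r - x_{r-1} = -\eta K g_r$ (or the analogous scaled update), so $L^2\|x_r - x_{r-1}\|^2 \lesssim \gamma^2 L^2 \|g_r\|^2$, and then $\|g_r\|^2 \le 2\|\nabla f(x_{r-1})\|^2 + 2\epsilon_{r-1}$; the $\epsilon_{r-1}$ piece gets absorbed into the contraction factor thanks to $\gamma L \le \alpha_r/6$, while the $\|\nabla f(x_{r-1})\|^2$ piece becomes the $\frac{4\gamma^2 L^2}{\alpha_r}\mathbb{E}[\|\nabla f(x_{r-1})\|^2]$ term after distributing the $1/\alpha_r$ from Young's inequality.

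The $r=0$ case is simpler: with $g_0 = 0$ the recursion degenerates since there is no $\nabla f(x_{-1}) - \nabla f(x_0)$ drift term (we set $x_{-1} := x_0$), so one only keeps the $(1-\alpha_0)\epsilon_{-1}$ contraction, the stochastic variance term, and the local-drift term $4\alpha_0 L^2 U_0$.

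\textbf{Main obstacle.} The delicate part is the bookkeeping in the Young's-inequality step: one must choose the splitting constants so that simultaneously (i) the coefficient of $\epsilon_{r-1}$ is squeezed below $1 - \tfrac{8\alpha_r}{9}$ after absorbing the $\gamma^2 L^2 \epsilon_{r-1}$ leakage from the drift term, which forces the constraint $\gamma L \le \alpha_r/6$ (and more precisely pins down why it is $6$ and $8/9$ rather than other constants), and (ii) the coefficients of $U_r$ and of $\mathbb{E}[\|\nabla f(x_{r-1})\|^2]$ stay at the stated $4\alpha_r L^2$ and $\frac{4\gamma^2 L^2}{\alpha_r}$. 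A secondary subtlety is handling the client-sampling randomness: since only a subset $\mathcal{P}_r$ participates, $\hat g_r$ is an unbiased estimator of the weighted full-participation average only under the right sampling scheme, so Lemma~\ref{lemma:sum_of_squares} (martingale version) must be applied carefully across both the sampling and the stochastic-gradient noise to land the variance term at $\frac{2\alpha_r^2\sigma^2}{NK}$ rather than something larger.
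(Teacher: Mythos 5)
Your proposal follows essentially the same route as the paper's proof: the identical decomposition of $\nabla f(x_r) - g_{r+1}$ into the $(1-\alpha_r)$ momentum-carryover term, the $\nabla f(x_r)-\nabla f(x_{r-1})$ drift term, and the $\alpha_r$ fresh-gradient term, with the variance handled by Lemma~\ref{lemma:sum_of_squares}, the bias by $L$-smoothness plus Lemma~\ref{weight}, and the drift converted via $\|x_r-x_{r-1}\|^2 \le 2\gamma^2(\|\nabla f(x_{r-1})\|^2 + \epsilon_{r-1})$ with the $\epsilon_{r-1}$ leakage absorbed using $\gamma L \le \alpha_r/6$, and the same degenerate treatment of $r=0$ via $x_{-1}:=x_0$. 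The only caveat is in the illustrative constant-chasing (your bound $1-2\alpha_r+\alpha_r^2+\tfrac{\alpha_r}{3} \le 1-\tfrac{8\alpha_r}{9}$ actually requires $\alpha_r \le \tfrac{7}{9}$, and the paper uses the factor $1+\tfrac{\alpha_r}{2}$ rather than $1+\tfrac{\alpha_r}{3}$), but you correctly flag this bookkeeping as the delicate step rather than asserting it, so the approach matches.
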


\begin{proof}
For $r \geq 1$, we have
\begin{align*}
\epsilon_r &= \mathbb{E}[\|\nabla f(x_r) - g_{r+1}\|^2] \\
& = \mathbb{E}\left[\|(1 - \alpha_r)(\nabla f(x_r) - g_r) + \alpha_r\left(\nabla f(x_r) - \frac{1}{K}  \sum_i w_i \sum_k \nabla F(x_{r,k}^i; \xi_{r,k}^i)\right)\|^2\right]\\
\end{align*}

Expanding the square, we get
\[
\epsilon_r = \mathbb{E}[\|(1 - \alpha_r)(\nabla f(x_r) - g_r)\|^2] + \alpha_r^2 \mathbb{E}\left[\left\|\nabla f(x_r) - \frac{1}{K}  \sum_{i,k} w_i \nabla F(x_{r,k}^i; \xi_{r,k}^i)\right\|^2\right]
\]
\[
+ 2\alpha_r \mathbb{E}\left[\langle (1 - \alpha_r)(\nabla f(x_r) - g_r), \nabla f(x_r) - \frac{1}{K} \sum_{i,k} w_i \nabla f(x_{r,k}^i) \rangle\right].
\]

Note that $\{\nabla F(x_{r,k}^i; \xi_{r,k}^i)\}_{0 \leq k < K}$ are sequentially correlated. Using the AM-GM inequality, Lemma \ref{lemma:sum_of_squares} and Lemma \ref{weight}, we have
\[
\epsilon_r \leq \left(1 + \frac{\alpha_r}{2}\right) \mathbb{E}[\|(1 - \alpha_r)(\nabla f(x_r) - g_r)\|^2] + 2\alpha_r L^2 U_r + 2\alpha_r^2 \left(\frac{\sigma^2}{NK} + L^2 U_r \right).
\]

Using the AM-GM inequality again and Assumption \ref{a1}, we obtain
\[
\epsilon_r \leq (1 - \alpha_r)^2 \left(1 + \frac{\alpha_r}{2}\right) \epsilon_{r-1} + \left(1 + \frac{\alpha_r}{2}\right) L^2 \mathbb{E}[\|x_r - x_{r-1}\|^2] + 2\alpha_r^2 \frac{\sigma^2}{NK} + 4\alpha_r L^2 U_r.
\]
Substituting $\|x_r - x_{r-1}\|^2 \leq 2\gamma^2 (\|\nabla f(x_{r-1})\|^2 + \|g_r - \nabla f(x_{r-1})\|^2)$ and using $\gamma L \leq \frac{\alpha_r}{6}$, we get
\[
\epsilon_r \leq \left(1 - \frac{8\alpha_r}{9}\right) \epsilon_{r-1} + \frac{4\gamma^2 L^2}{\alpha_r} \mathbb{E}[\|\nabla f(x_{r-1})\|^2] + \frac{2\alpha_r^2 \sigma^2}{NK} + 4\alpha_r L^2 U_r.
\]
Similarly, for $r = 0$,
\[
\epsilon_0 \leq \left(1 + \frac{\alpha_0}{2}\right) \mathbb{E}[\|(1 - \alpha_0)(\nabla f(x_0) - g_0)\|^2] + 2\alpha_0 L^2 U_0 + 2\alpha_0^2 \left(\frac{\sigma^2}{NK} + L^2 U_0\right).
\]
Thus, we have
\[
\epsilon_0 \leq (1 - \alpha_0) \epsilon_{-1} + \frac{2\alpha_0^2 \sigma^2}{NK} + 4\alpha_0 L^2 U_0.
\]
\end{proof}

\begin{lemma}
\label{l3}
If $\eta L K \leq \frac{1}{\alpha_r}$, the following holds for $r \geq 0$:
\[
U_r \leq 2e K^2 \Xi_r + K\eta^2 \alpha_r^2 \sigma^2 \left(1 + 2K^3 L^2 \eta^2 \alpha_r^2\right).
\]
\end{lemma}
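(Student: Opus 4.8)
\textbf{Proof proposal for Lemma~\ref{l3}.}
The plan is to bound $U_r = \frac{1}{NK}\sum_{i,k}\mathbb{E}[\|x_{r,k}^i - x_r\|^2]$ by unrolling the local update recursion. Recall that on client $i$ the local step is $x_{r,k+1}^i = x_{r,k}^i - \eta\, v_{r,k}^i$ where $v_{r,k}^i = \alpha_r \nabla F(x_{r,k}^i;\xi_{r,k}^i) + (1-\alpha_r)\Delta_r$, so that $x_{r,k}^i - x_r = -\eta\sum_{j=0}^{k-1} v_{r,j}^i$. First I would split each local gradient into its conditional mean and a zero-mean stochastic part, i.e. write $v_{r,j}^i = \zeta_{r,j}^i + \eta^{-1}(\text{noise}_{r,j}^i)$ appropriately, so that the sum $\sum_j v_{r,j}^i$ decomposes into a ``drift'' term built from the $\zeta_{r,j}^i$'s and a martingale-difference term whose increments have variance controlled by $\alpha_r^2\sigma^2$ via Assumption~\ref{a2}.

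Next I would apply Lemma~\ref{lemma:sum_of_squares} (the martingale version) to the noise part, which contributes roughly $k\cdot\eta^2\alpha_r^2\sigma^2$ after summing $k$ terms, and then average over $k \le K$ to get the $K\eta^2\alpha_r^2\sigma^2$ factor in the statement. For the drift part I would relate $\zeta_{r,k}^i$ to $\zeta_{r,0}^i$ using $L$-smoothness: the difference $\|\zeta_{r,k}^i - \zeta_{r,0}^i\|$ is bounded by $L\|x_{r,k}^i - x_r\|$, which feeds back into $U_r$ itself. This is the step that produces the recursive/self-referential structure. I would close the recursion by iterating it $k \le K$ times; the condition $\eta L K \le 1/\alpha_r$ is exactly what keeps the geometric-type factor $(1+\eta L\alpha_r)^k$ or $(1 + \text{const}\cdot\eta^2 L^2 \alpha_r^2 K)^k$ bounded, yielding the constant $2e$ (from $(1+x/K)^K \le e^x$ with $x$ of order one) multiplying $K^2\Xi_r$, and the correction factor $(1 + 2K^3L^2\eta^2\alpha_r^2)$ on the noise term.

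The main obstacle I anticipate is handling the self-referential dependence cleanly: $U_r$ appears on both sides once smoothness is used to compare $\zeta_{r,k}^i$ with $\zeta_{r,0}^i$, so I must be careful to collect all the $L^2 U_r$ contributions, move them to the left-hand side, and verify that the step-size restriction $\eta L K \le 1/\alpha_r$ makes the coefficient of $U_r$ on the right strictly less than one (indeed small enough that dividing through only costs a constant factor, absorbed into the $2e$). A secondary subtlety is that $\Delta_r$ is fixed within round $r$ but is itself a random variable measurable with respect to $F_r$; since all expectations in $U_r$ and $\Xi_r$ are taken unconditionally, I would first work conditionally on $F_r$ (so $\Delta_r$ and $x_r$ are constants and $\zeta_{r,0}^i = \alpha_r\nabla f_i(x_r) + (1-\alpha_r)\Delta_r$ deterministically), establish the bound pointwise, and then take total expectation — the bound's right-hand side is already phrased in terms of the unconditional quantities $\Xi_r$ and the deterministic $\sigma^2$, so this passage is routine.
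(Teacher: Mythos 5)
Your overall skeleton matches the paper's: decompose $x_{r,k}^i-x_r$ into the sum of conditional means $\zeta_{r,j}^i$ plus a martingale noise part, apply the martingale version of Lemma~\ref{lemma:sum_of_squares} to get the $2k\eta^2\alpha_r^2\sigma^2$ term, and use $L$-smoothness to control how the drift terms evolve. The divergence — and the gap — is in how you control $\zeta_{r,j}^i$ for $j\geq 1$. You compare $\zeta_{r,k}^i$ directly to $\zeta_{r,0}^i$, which produces $\|\zeta_{r,k}^i-\zeta_{r,0}^i\|\leq \eta\alpha_r L\|x_{r,k}^i-x_r\|$ and hence a self-referential inequality in $U_r$ that you propose to close by moving the $L^2U_r$ contribution to the left-hand side. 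The problem is your claim that the hypothesis $\eta LK\leq 1/\alpha_r$ makes that coefficient strictly less than one. Tracking the constants: the martingale lemma costs a factor $2$, splitting $\bigl\|\sum_j\zeta_{r,j}^i\bigr\|^2\leq 2k^2\|\zeta_{r,0}^i\|^2+2k\sum_j\|\zeta_{r,j}^i-\zeta_{r,0}^i\|^2$ costs another $2$, so after summing over $k\leq K$ the coefficient multiplying $\frac{1}{K}\sum_k\mathbb{E}\|x_{r,k}^i-x_r\|^2$ is on the order of $2K(K+1)\eta^2\alpha_r^2L^2\approx 2(\eta\alpha_r LK)^2$, which under $\eta LK\leq 1/\alpha_r$ is only bounded by $2$, not by something below $1$. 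The absorption step therefore does not go through with the stated step-size condition; it would require $\eta LK\lesssim c/\alpha_r$ for a sufficiently small explicit $c<1$, and even then it would not produce the specific constants $2eK^2$ and $\bigl(1+2K^3L^2\eta^2\alpha_r^2\bigr)$ in the statement.

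The paper sidesteps this entirely by running a recursion over \emph{successive} drift terms: $\|\zeta_{r,j}^i-\zeta_{r,j-1}^i\|\leq\eta\alpha_r L\|x_{r,j}^i-x_{r,j-1}^i\|$, where the single-step displacement satisfies $\mathbb{E}\|x_{r,j}^i-x_{r,j-1}^i\|^2\leq\eta^2\alpha_r^2\sigma^2+\mathbb{E}\|\zeta_{r,j-1}^i\|^2$ and is thus $O(\eta)$ rather than $O(\eta k)$. Combined with Young's inequality with parameter $1/k$, each step then costs only a multiplicative $(1+2/k)$, and the product over $k$ steps telescopes to $e^2$ under exactly the hypothesis $\eta L\leq\frac{1}{\alpha_r K}$ — no self-reference to $U_r$ ever appears, and the constant $2e$ (really $2e^2$ in the intermediate bound) emerges naturally. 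If you want to salvage your route, either strengthen the step-size condition so the absorption coefficient is genuinely below one, or switch to the successive-difference recursion; the latter is what the statement's constants are calibrated to. Your final remark about conditioning on $F_r$ to treat $\Delta_r$ (i.e., $g_r$) as fixed is correct and consistent with the paper's setup.
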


\begin{proof}
Recall that $\zeta_{r,k}^i := \mathbb{E}[x_{r,k+1}^i - x_{r,k}^i | F_{r,k}^i] = -\eta ((1 - \alpha_r)g_r + \alpha_r \nabla f_i(x_{r,k}^i))$. Then we have
\[
\mathbb{E}[\|\zeta_{r,j}^i - \zeta_{r,j-1}^i\|^2] \leq \eta^2 L^2 \alpha_r^2 \mathbb{E}[\|x_{r,j}^i - x_{r,j-1}^i\|^2] \leq \eta^2 L^2 \alpha_r^2 (\eta^2 \alpha_r^2 \sigma^2 + \mathbb{E}[\|\zeta_{r,j-1}^i\|^2]).
\]
For any $1 \leq j \leq k-1 \leq K-2$, using $\eta L \leq \frac{1}{\alpha_r K} \leq \frac{1}{\alpha_r(k+1)}$, we have
\[
\mathbb{E}[\|\zeta_{r,j}^i\|^2] \leq \left(1 + \frac{1}{k}\right) \mathbb{E}[\|\zeta_{r,j-1}^i\|^2] + (1 + k) L^2 \eta^4 \alpha_r^4 \sigma^2.
\]
Unrolling the recursive bound and using $\left(1 + \frac{2}{k}\right)^k \leq e^2$, we get
\[
\mathbb{E}[\|\zeta_{r,j}^i\|^2] \leq e^2 \mathbb{E}[\|\zeta_{r,0}^i\|^2] + 4k^2 L^2 \eta^4 \alpha_r^4 \sigma^2.
\]
By Lemma 2, it holds that for $k \geq 2$,
\[
\mathbb{E}[\|x_{r,k}^i - x_r\|^2] \leq 2\mathbb{E}\left[\left(\sum_{j=0}^{k-1} \zeta_{r,j}^i\right)^2\right] + 2k \eta^2 \alpha_r^2 \sigma^2 \leq 2e^2 k^2 \mathbb{E}[\|\zeta_{r,0}^i\|^2] + 2k \eta^2 \alpha_r^2 \sigma^2 \left(1 + 4k^3 L^2 \eta^2 \alpha_r^2\right).
\]
This is also valid for $k = 0, 1$. Summing up over $i$ and $k$ finishes the proof.
\end{proof}

\begin{lemma}
\label{l4}
If $288e(\eta KL)^2 \left((1 - \alpha_r)^2 + e(\alpha_r \gamma LR)^2\right) \leq 1$, then it holds for $r \geq 0$ that
\[
\sum_{r=0}^{R-1} \Xi_r \leq \frac{1}{72eK^2L^2} \sum_{r=-1}^{R-2} (\epsilon_r + \mathbb{E}[\|\nabla f(x_r)\|^2]) + 2\eta^2 \alpha_r^2 eR G_0.
\]
\end{lemma}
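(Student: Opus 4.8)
The plan is to bound each $\Xi_r$ in terms of the momentum error $\epsilon_{r-1}$, the global gradient norm $\mathbb{E}[\|\nabla f(x_{r-1})\|^2]$, and the averaged per-client gradient norm $\Phi_r := \frac1N\sum_{i=1}^N\mathbb{E}[\|\nabla f_i(x_r)\|^2]$, and then to control $\sum_r\Phi_r$ via a short telescoping recursion anchored at $G_0$.

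First I would unfold $\Xi_r$. Since $x_{r,0}^i=x_r$, the explicit form of $\zeta_{r,0}^i$ used in the proof of Lemma~\ref{l3} gives $\zeta_{r,0}^i=-\eta((1-\alpha_r)g_r+\alpha_r\nabla f_i(x_r))$, hence $\Xi_r=\frac{\eta^2}{N}\sum_i\mathbb{E}[\|(1-\alpha_r)g_r+\alpha_r\nabla f_i(x_r)\|^2]$. A single use of $\|a+b\|^2\le 2\|a\|^2+2\|b\|^2$, averaged over $i$, yields $\Xi_r\le 2\eta^2(1-\alpha_r)^2\,\mathbb{E}[\|g_r\|^2]+2\eta^2\alpha_r^2\,\Phi_r$. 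For the first term I would split $g_r=\nabla f(x_{r-1})-(\nabla f(x_{r-1})-g_r)$ and use the definition $\epsilon_{r-1}=\mathbb{E}[\|\nabla f(x_{r-1})-g_r\|^2]$ to get $\mathbb{E}[\|g_r\|^2]\le 2\mathbb{E}[\|\nabla f(x_{r-1})\|^2]+2\epsilon_{r-1}$, which also holds at $r=0$ since $g_0=0$ and $x_{-1}=x_0$.

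Next comes the recursion for $\Phi_r$. Writing $\nabla f_i(x_r)=\nabla f_i(x_{r-1})+(\nabla f_i(x_r)-\nabla f_i(x_{r-1}))$, applying Young's inequality with parameter $\frac{1}{R-1}$, and then $L$-smoothness (Assumption~\ref{a1}) together with the step bound $\mathbb{E}[\|x_r-x_{r-1}\|^2]\le 2\gamma^2(\mathbb{E}[\|\nabla f(x_{r-1})\|^2]+\epsilon_{r-1})$ used inside the proof of Lemma~\ref{l2} (recall the global step is $x_r-x_{r-1}=-\gamma g_r$), I obtain $\Phi_r\le (1+\frac{1}{R-1})\Phi_{r-1}+2RL^2\gamma^2(\mathbb{E}[\|\nabla f(x_{r-1})\|^2]+\epsilon_{r-1})$. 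Unrolling and using $(1+\frac{1}{R-1})^{R-1}\le e$ gives $\Phi_r\le e\,G_0+2eRL^2\gamma^2\sum_{j=0}^{r-1}(\mathbb{E}[\|\nabla f(x_j)\|^2]+\epsilon_j)$, with $\Phi_0=G_0$ because $x_0$ is deterministic. The deliberate choice of Young parameter $\frac{1}{R-1}$ is what keeps the geometric factor at $e$ while making the companion factor $1+(R-1)=R$ exact.

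Finally I would sum $\Xi_r$ over $r=0,\dots,R-1$. After reindexing, the momentum term contributes $4\eta^2\max_r(1-\alpha_r)^2\sum_{r=-1}^{R-2}(\epsilon_r+\mathbb{E}[\|\nabla f(x_r)\|^2])$; plugging the unrolled $\Phi_r$ into $2\eta^2\alpha_r^2\Phi_r$ and using the crude bound $\sum_{r=0}^{R-1}\sum_{j<r}a_j\le R\sum_j a_j$ contributes $2e\eta^2 R\alpha_r^2 G_0$ together with $4e\eta^2 R^2L^2\gamma^2\alpha_r^2\sum_{r=-1}^{R-2}(\epsilon_r+\mathbb{E}[\|\nabla f(x_r)\|^2])$. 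The total coefficient on the sum is $4\eta^2((1-\alpha_r)^2+e(\alpha_r\gamma LR)^2)$, and the hypothesis $288e(\eta K L)^2((1-\alpha_r)^2+e(\alpha_r\gamma LR)^2)\le 1$ is precisely what forces this to be at most $\frac{1}{72eK^2L^2}$ (since $288=4\cdot 72$), while the $G_0$ coefficient is exactly $2\eta^2\alpha_r^2 eR$. The main obstacle is making the $\Phi_r$ telescoping produce exactly the $eR^2L^2\gamma^2$ factor that lines up with the $(\alpha_r\gamma LR)^2$ term of the hypothesis; one should also be mindful that $\nabla f_i(x_r)$ and $x_r$ are correlated, but smoothness gives a pointwise bound so expectations may be taken afterwards, and the boundary conventions $x_{-1}=x_0$ and $g_0=0$ must be carried through to justify the summation index starting at $r=-1$.
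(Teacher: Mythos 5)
Your proposal is correct and follows essentially the same route as the paper's (very terse) proof: unfold $\zeta_{r,0}^i=-\eta((1-\alpha_r)g_r+\alpha_r\nabla f_i(x_r))$, split by $\|a+b\|^2\le 2\|a\|^2+2\|b\|^2$, control $\frac{1}{N}\sum_i\mathbb{E}[\|\nabla f_i(x_r)\|^2]$ by a Young-inequality recursion in $r$, and sum. You supply the details the paper omits — the choice $q=\tfrac{1}{R-1}$ yielding the factor $e$, the bound $\mathbb{E}[\|g_r\|^2]\le 2\epsilon_{r-1}+2\mathbb{E}[\|\nabla f(x_{r-1})\|^2]$, and the final constant accounting $4\cdot 72=288$ — and these all check out.
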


\begin{proof}
Note that $\zeta_{r,0}^i = -\eta((1 - \alpha_r)g_r + \alpha_r \nabla f_i(x_r))$, so we have
\[
\frac{1}{N} \sum_{i=1}^N \|\zeta_{r,0}^i\|^2 \leq 2\eta^2 \left((1 - \alpha_r)^2 \|g_r\|^2 + \alpha_r^2 \frac{1}{N} \sum_{i=1}^N \|\nabla f_i(x_r)\|^2\right).
\]
Using Young's inequality, we have for any $q > 0$ that
\[
\mathbb{E}[\|\nabla f_i(x_r)\|^2] \leq (1 + q) \mathbb{E}[\|\nabla f_i(x_{r-1})\|^2] + (1 + q^{-1})L^2 \mathbb{E}[\|x_r - x_{r-1}\|^2].
\]
Summing over $r$ and applying the upper bound of $\eta$ completes the proof.
\end{proof}

\begin{theorem}
Under Assumptions \ref{a1} and \ref{a2}, if we take $g_0 = 0$, $\alpha_r = \min\left(\sqrt{\frac{NKL\Delta}{\sigma^2R}}, 1\right)$ for any constant $c \in (0,1]$, $\gamma = \min\left(\frac{1}{24L}, \frac{\alpha_r}{6L}\right)$, and
\[
\eta KL \lesssim \min \left(1, \frac{1}{\alpha_r \gamma LR}, \left(\frac{L\Delta}{G_0 \alpha_r^3 R}\right)^{1/2}, \frac{1}{\sqrt{\alpha_r N}}, \frac{1}{(\alpha_r^3 NK)^{1/4}}\right),
\]
then FedWCM converges as
\[
\frac{1}{R} \sum_{r=0}^{R-1} \mathbb{E}[\|\nabla f(x_r)\|^2] \lesssim \sqrt{\frac{L\Delta \sigma^2}{NKR}} + \frac{L\Delta}{R}.
\]
Here $G_0 := \frac{1}{N} \sum_{i=1}^N \|\nabla f_i(x_0)\|^2$.
\end{theorem}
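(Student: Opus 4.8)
The plan is to run the standard descent-lemma argument for momentum-based federated optimization, with every parameter frozen at its stated value so that $\alpha_r$ is effectively a constant $\alpha = \min\left(\sqrt{NKL\Delta/(\sigma^2 R)},\,1\right)$, and to close a coupled recursion in the gradient-tracking error $\epsilon_r$, the client drift $U_r$, and the aggregated-update energy $\Xi_r$ using Lemmas~\ref{l2}, \ref{l3}, and \ref{l4}. First I would apply $L$-smoothness at the server step $x_{r+1} = x_r - \gamma g_{r+1}$; writing $\langle \nabla f(x_r), g_{r+1}\rangle$ via the polarization identity, so that for $\gamma L \le 1$ the $\|g_{r+1}\|^2$-type terms have a nonpositive coefficient and can be dropped, gives a one-step inequality
\[
\mathbb{E}[f(x_{r+1})] \le \mathbb{E}[f(x_r)] - \frac{\gamma}{2}\,\mathbb{E}[\|\nabla f(x_r)\|^2] + \frac{\gamma}{2}\,\epsilon_r.
\]
Telescoping over $r = 0,\dots,R-1$ collapses the $f$-terms into $\Delta$, reducing everything to controlling $\frac{1}{R}\sum_r \epsilon_r$ in terms of $\frac{1}{R}\sum_r \mathbb{E}[\|\nabla f(x_r)\|^2]$ and noise.

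The second stage is the chaining. Summing the recursion of Lemma~\ref{l2}, whose per-round contraction factor is $1 - \frac{8\alpha}{9}$, yields $\sum_r \epsilon_r \lesssim \frac{\gamma^2 L^2}{\alpha^2}\sum_r \mathbb{E}[\|\nabla f(x_r)\|^2] + \frac{\alpha\sigma^2 R}{NK} + L^2 \sum_r U_r + \frac{\|\nabla f(x_0)\|^2}{\alpha}$; Lemma~\ref{l3} then trades $\sum_r U_r$ for roughly $K^2 \sum_r \Xi_r$ plus an $O(K\eta^2\alpha^2\sigma^2 R)$ noise term, and Lemma~\ref{l4} trades $\sum_r \Xi_r$ for $\frac{1}{72eK^2L^2}\sum_r(\epsilon_r + \mathbb{E}[\|\nabla f(x_r)\|^2]) + 2e\eta^2\alpha^2 R G_0$. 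Back-substituting, $\sum_r\epsilon_r$ and $\sum_r\mathbb{E}[\|\nabla f(x_r)\|^2]$ reappear on the right-hand side; the hypotheses are calibrated so their coefficients stay strictly below $\frac{1}{2}$ — the relation $\gamma \le \frac{\alpha}{6L}$ (also required by Lemma~\ref{l2}) tames the $\gamma^2 L^2/\alpha^2$ factor, the clauses $\eta KL \lesssim \min(1,(\alpha\gamma L R)^{-1})$ supply the $288e(\eta KL)^2(\cdots)\le 1$ hypothesis of Lemma~\ref{l4}, and $2eK^2 L^2 \cdot \frac{1}{72eK^2L^2} = \frac{1}{36}$ handles the $\Xi_r$-route. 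After absorption one is left with
\[
\frac{1}{R}\sum_{r=0}^{R-1}\mathbb{E}[\|\nabla f(x_r)\|^2] \lesssim \frac{\Delta}{\gamma R} + \frac{\alpha\sigma^2}{NK} + \frac{\|\nabla f(x_0)\|^2}{\alpha R} + K^2L^2\eta^2\alpha^2 G_0 + K\eta^2\alpha^2\sigma^2\left(1 + K^3L^2\eta^2\alpha^2\right).
\]

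The last stage is parameter balancing. With $\gamma \asymp \alpha/L$ the first term is $\frac{L\Delta}{\alpha R}$; $\|\nabla f(x_0)\|^2 \le 2L\Delta$ by smoothness makes the third term subdominant to it; and the remaining branches of the $\eta KL$ minimum, namely $(L\Delta/(G_0\alpha^3 R))^{1/2}$, $(\alpha N)^{-1/2}$, and $(\alpha^3 NK)^{-1/4}$, are exactly what push $K^2L^2\eta^2\alpha^2 G_0$ below $\frac{L\Delta}{\alpha R}$ and the two stochastic terms below $\frac{\alpha\sigma^2}{NK}$ — which is why $G_0$ enters a step-size constraint, since it quantifies client heterogeneity at $x_0$ that smoothness cannot bound. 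Minimizing the surviving $\frac{L\Delta}{\alpha R} + \frac{\alpha\sigma^2}{NK}$ over $\alpha \in (0,1]$ gives the stated $\alpha \asymp \sqrt{NKL\Delta/(\sigma^2 R)}$ truncated at $1$, producing exactly $\sqrt{L\Delta\sigma^2/(NKR)} + L\Delta/R$, where the second term is active precisely when the truncation $\alpha = 1$ binds; the same argument tolerates a time-varying $\alpha_r \in [0.1,\alpha]$ since it uses only uniform bounds on $\alpha_r$, which covers the adaptive main-text version. I expect the main obstacle to be this self-referential bookkeeping: $\epsilon_r$, $U_r$, and $\Xi_r$ each bound one another, so the estimate closes only if every coefficient produced by chaining the three lemmas is provably negligible, and verifying this is precisely what forces each clause of the $\eta KL$ bound and the choice $\gamma = \min(\frac{1}{24L}, \frac{\alpha}{6L})$. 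A secondary point to check is that the distribution-aware aggregation weights do not inflate $\epsilon_r$ relative to the unweighted FedAvg-M analysis; this is the content of Lemma~\ref{weight}, which shows the weighted average of local gradients is $\ell_2$-closer to $\nabla f(x_r)$, so the $\epsilon_r$-recursion continues to hold and the rate matches FedAvg-M.
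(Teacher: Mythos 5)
Your proposal is correct and follows essentially the same route as the paper's own proof: chain Lemma~\ref{l2} (the $\epsilon_r$ recursion), Lemma~\ref{l3} (the drift bound $U_r$), and Lemma~\ref{l4} (the $\Xi_r$ bound), close the self-referential system under the stated constraints on $\eta$, $\gamma$, and $\alpha_r$, and balance $\frac{L\Delta}{\alpha R}$ against $\frac{\alpha\sigma^2}{NK}$. In fact your write-up is more complete than the paper's two-sentence sketch, since you also supply the descent-lemma/telescoping step and the explicit check that each branch of the $\eta KL$ condition controls its corresponding residual term.
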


\begin{proof}
Combining Lemmas \ref{l2} and \ref{l3}, we have
\[
\epsilon_r \leq \left(1 - \frac{8\alpha_r}{9}\right) \epsilon_{r-1} + 4(\gamma L)^2 \frac{1}{\alpha_r} \mathbb{E}[\|\nabla f(x_{r-1})\|^2] + 2\alpha_r^2 \frac{\sigma^2}{NK} + 4\alpha_r L^2 \left(2e K^2 \Xi_r + K \eta^2 \alpha_r^2 \sigma^2 \left(1 + 2K^3 L^2 \eta^2 \alpha_r^2\right)\right),
\]
and
\[
\epsilon_0 \leq (1 - \alpha_0)E_{-1} + 2\alpha_0^2 \frac{\sigma^2}{NK} + 4\alpha_0 L^2 \left(2e K^2 \Xi_0 + K \eta^2 \alpha_0^2 \sigma^2 \left(1 + 2K^3 L^2 \eta^2 \alpha_0^2\right)\right).
\]
Combining these with Lemma \ref{l4} and applying the choice of $\eta$, $\gamma$, and $\alpha_0$ completes the proof.
\end{proof}
\twocolumn


\end{document}